\newcounter{anonymous}
\ifnum\value{anonymous}=1
\definecolor{baseline}{HTML}{666666}
\definecolor{aggregation}{HTML}{000080}
\definecolor{crowd}{HTML}{800080}
\definecolor{ensemble}{HTML}{008080}
\definecolor{color_annot_green}{RGB}{0,128,128}
\definecolor{color_annot_blue}{RGB}{42,127,255}
\definecolor{color_annot_violet}{RGB}{127,0,127}
\definecolor{pink_ground_truth}{RGB}{128,0,128}
\definecolor{green_annotations}{RGB}{0,128,128}
\definecolor{datasetcolor}{RGB}{240,240,240}
\definecolor{navy_accent}{RGB}{  0,  0,128}
\newcommand{\cmark}{\raisebox{-0.1em}[0pt][0pt]{\ding{51}}}
\newcommand{\xmark}{\raisebox{-0.1em}[0pt][0pt]{\ding{55}}}
\newcommand{\yesnoise}{\cellcolor{yescolor!20} $\text{\cmark}_{\text{\textBF{NL}}}$} 
\newcommand{\yestrue}{\cellcolor{unknowncolor!20} $\text{\cmark}_{\text{\textBF{TL}}}$} 
\newcommand{\no}{\cellcolor{nocolor!20}\xmark}
\newcommand{\unknown}{\textbf{?}}
\definecolor{yescolor}{rgb}{0,0.502,0.502}
\definecolor{nocolor}{rgb}{0.502,0,0.502}
\definecolor{unknowncolor}{rgb}{0.402,0.402,0.402}
\definecolor{partialcolor}{rgb}{0.302,0.302,0.302}
\newcommand{\defas}{\coloneqq}
\DeclareMathOperator*{\argmax}{arg\,max}
\DeclareMathOperator*{\argmin}{arg\,min}
\newsavebox\CBox
\def\textBF#1{\sbox\CBox{#1}\resizebox{\wd\CBox}{\ht\CBox}{\textbf{#1}}}
\newcounter{loopc}
\NewDocumentCommand\towrite{O{1}m}%
  {{\color{red}#2\forloop{loopc}{1}{\value{loopc} < #1}{; #2}}}
\newdimen\abovecrulesep
\newdimen\belowcrulesep
\patchcmd{\@@@cmidrule}{\aboverulesep}{\abovecrulesep}{}{}
\patchcmd{\@xcmidrule}{\belowrulesep}{\belowcrulesep}{}{}
\newcommand*\circled[1]{\tikz[baseline=(char.base)]{\node[shape=circle,draw,inner sep=0.5pt] (char) {#1};}}
\renewcommand{\mid}{\hspace*{0.01cm}\vert\hspace*{0.01cm}}  
\definecolor{myboxcolor}{RGB}{0,128,128}
\newenvironment{mybox}[1][]
{
    \begin{tcolorbox}
    [
        enhanced,
        title=#1,
        colback=myboxcolor!7,
        colbacktitle=myboxcolor!7,
        coltitle=black,
        left=5.25pt,
        right=5.25pt,
        top=9pt,
        bottom=5pt,
        attach boxed title to top left={xshift=8pt, yshift=-9pt},
        boxed title style={frame hidden, size=small, colback=myboxcolor!7},
        sharp corners,
        rounded corners,
        arc=7pt,
    ]
}
{
    \end{tcolorbox}
}
\newcommand{\BlackBox}{\rule{1.5ex}{1.5ex}}  
    \renewenvironment{proof}{\par\noindent{\bf Proof\ }}{\hfill\BlackBox\\[2mm]}
    \newenvironment{proof}{\par\noindent{\bf Proof\ }}{\hfill\BlackBox\\[2mm]}
\newtheorem{theorem}{Theorem}
\newtheorem{proposition}[theorem]{Proposition}
\newcommand{\impact}[1]{%
  \subsection*{Broader Impact Statement}
  #1%
}
\newcommand{\acks}[1]{%
  \subsection*{Acknowledgements}
  #1%
}
\title{
    \texttt{crowd-hpo}: Realistic Hyperparameter Optimization and Benchmarking for Learning from Crowds with Noisy Labels
}
\author{%
    \name Marek Herde, Lukas Lührs, Denis Huseljic, and Bernhard Sick \email \href{mailto:marek.herde@uni-kassel.de}{marek.herde@uni-kassel.de} \\
    \addr Intelligent Embedded Systems, University of Kassel, Hesse, Germany
}
\begin{document}

\maketitle

\begin{abstract}
    Crowdworking is a cost-efficient solution for acquiring class labels. Since these labels are subject to noise, various approaches to learning from crowds have been proposed. Typically, these approaches are evaluated with default hyperparameter configurations, resulting in unfair and suboptimal performance, or with hyperparameter configurations tuned via a validation set with ground truth class labels, representing an often unrealistic scenario. Moreover, both setups can produce different approach rankings, complicating study comparisons. Therefore, we introduce \texttt{crowd-hpo} as a framework for evaluating approaches to learning from crowds in combination with criteria to select well-performing hyperparameter configurations with access only to noisy crowd-labeled validation data. Extensive experiments with neural networks demonstrate that these criteria select hyperparameter configurations, which improve the learning from crowd approaches' generalization performances, measured on separate test sets with ground truth labels. Hence, incorporating such criteria into experimental studies is essential for enabling fairer and more realistic benchmarking.
\end{abstract}

\ifthenelse{\boolean{tmlr}}
    {%
    }
    {%
        \begin{keywords}
            learning from crowds, noisy labels, hyperparameter optimization, classification
        \end{keywords}  
    }

\ifnum\value{anonymous}=1
    \newcommand{\mygithub}{\url{https://github.com/anonymous/anonymous}}
\else
    \newcommand{\mygithub}{\url{https://github.com/ies-research/multi-annotator-machine-learning/tree/crowd-hpo}}
\fi
\section{Introduction}
\label{sec:introduction}
Crowdworking represents a popular and cost-efficient solution to label data instances for classification tasks~\citep{vaughan2017making}. However, the corresponding crowdworkers are error-prone for various reasons, e.g., missing domain knowledge, lack of concentration, or even adversarial behavior~\citep{herde2021survey}. Training deep neural networks with noisy crowd-labeled data decreases generalization performance because these networks tend to memorize the false class labels~\citep{zhang2017understanding}. Hence, many approaches intend to improve the robustness against noisy labels. Together, they form the research area of \textit{learning from noisy labels} (LNL) with the core topics of regularization, sample selection, robust loss functions, or dedicated neural network architectures~\citep{song2022learning}. Within this area, \textit{learning from crowds}\footnote{We use the term learning from crowds, whereas other publications in the same research area refer to multiple annotators~\citep{li2022beyond} or labelers~\citep{rodrigues2013learning} instead of crowdworkers.} (LFC, ~\citeauthor{raykar2010learning},~\citeyear{raykar2010learning}) approaches explicitly handle crowd-labeled data, where each instance receives a (potentially varying) number of noisy class labels and where we know which label originates from which crowdworker. Accordingly, these approaches estimate the crowdworkers' performances (e.g., labeling accuracies) to infer the instances' true (i.e., ground truth) class labels. Many experimental evaluation studies have demonstrated the performance gains of such approaches~\citep{rodrigues2018deep,chu2021learning,nguyen2024noisy}.
\begin{figure}[t!]
    \centering
    \includegraphics[width=\textwidth]{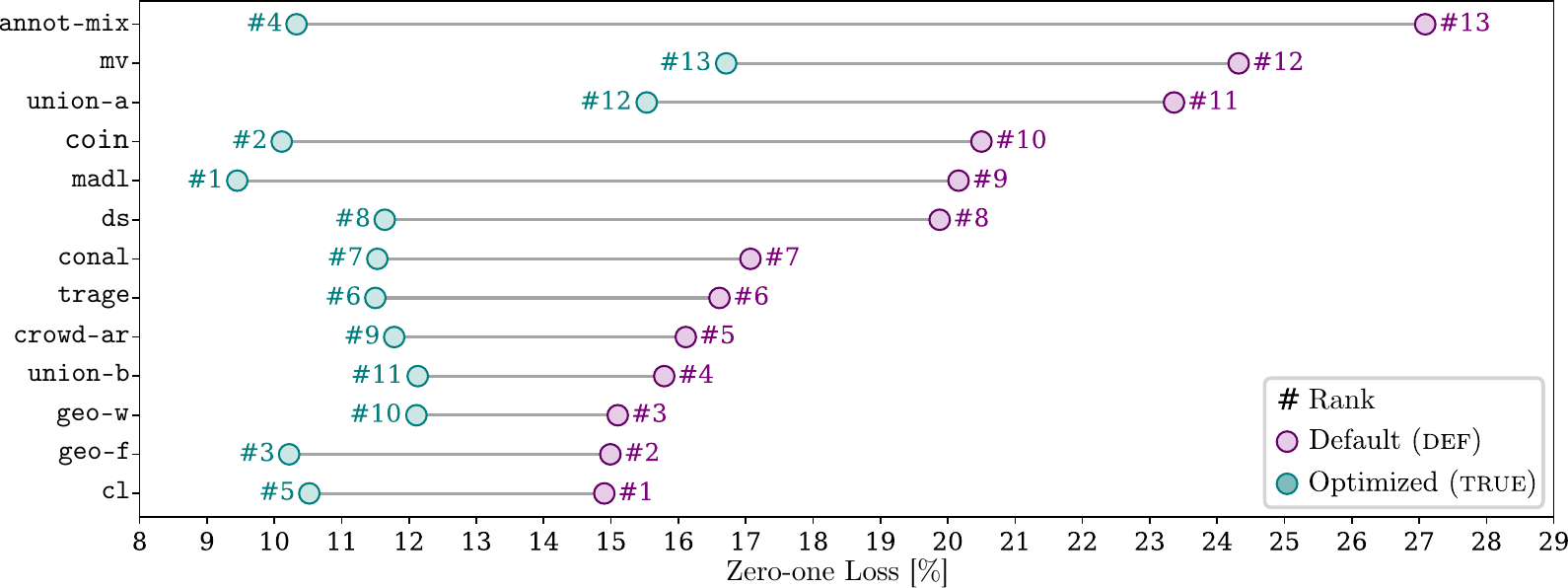}
    \caption{
        {\color{nocolor}\textbf{Default} (\textsc{def})} \textbf{versus} {\color{yescolor}\textbf{optimized}  (\textsc{true})} \textbf{HPCs for LFC approaches}. The $y$-axis lists the LFC approaches, and the $x$-axis the zero-one loss evaluated on a test set with true class labels of the \texttt{reuters-full} dataset~\citep{rodrigues2017learning}, whose training set contains noisy class labels from crowdworkers. Default HPCs result in substantially worse performance than HPCs optimized via validation data with true class labels. Further, HPO alters the approaches' ranking. For example, \texttt{cl}~\citep{rodrigues2018deep} performs best under default and only fifth-best under optimized HPCs, whereas \texttt{madl}~\citep{herde2023multi} moves from the ninth place with the default HPCs to the first place after optimization.
    }
    \label{fig:graphical-abstract}
\end{figure}
Translating these gains into practice demands effective \textit{hyperparameter optimization} (HPO) to find well-performing \textit{hyperparameter configurations} (HPCs). While approaches for training standard neural networks are tuned against a validation set with true class labels, LFC approaches have no access to such a set if all class labels originate from crowdworkers. As a result, HPO becomes a more difficult challenge. Potential workarounds are using data-agnostic default HPCs~\citep{chen2021robustness} or explicitly requiring access to a validation set with true class labels~\citep{herde2023multi}. We refer to both procedures as \textit{hyperparameter selection} (HPS) criteria because each chooses HPCs for the LFC approaches. Figure~\ref{fig:graphical-abstract} exemplifies that both HPS criteria lead to different test losses per LFC approach and even rankings between LFC approaches. Specifying default HPCs is realistic because it requires no true class labels. Nevertheless, such an HPS criterion often yields suboptimal performance results that are heavily influenced by nonobjective choices such as the experimenters' or software frameworks' presets, undermining fairness~\citep{bagnall2017use}. By contrast, HPO on a validation set with true labels can produce superior and fairer results when every LFC approach receives the same search budget. Nonetheless, this HPS criterion is unrealistic in an LFC setting where only noisy crowdworkers provide labels. Existing literature lacks HPS criteria to perform experiments for a fair (involving HPO) and realistic (with access only to crowd-labeled validation data) comparison of LFC approaches. Motivated by these observations and related ones in areas such as partial label learning~\citep{wang2025realistic}, we analyze the following \textit{research questions} (RQs):
\begin{mybox}[\texttt{crowd-hpo}: \textbf{Research Questions and Contributions}]
    \begin{enumerate}
        \item[$RQ_1$:] \textit{Given access only to crowd-labeled validation data, which evaluated hyperparameter selection criterion yields the highest performances for LFC approaches?}
        \item[$RQ_2$:] \textit{Given the best-evaluated hyperparameter selection criterion for crowd-labeled validation data, how do learning from crowds approaches compare in performance?}
    \end{enumerate}    
    Based on these research questions, we propose \texttt{crowd-hpo} for learning from crowds approaches with crowd-labeled validation data contributing:
    \begin{itemize}[leftmargin=*]
        \item a framework of hyperparameter selection criteria based on empirical risk measures to be combined in a robust ensemble,
        \item an extensive benchmark of $13$ learning from crowds approaches across $5$ real-world datasets, each with $7$ variants of noisy labels from humans,
        \item recommendations for a realistic and fair experimentation to compare learning from crowds approaches' performances in combination with hyperparameter optimization,
        \item and a comprehensive codebase\footnote{\mygithub} to reproduce and perform experimental studies for learning from crowds approaches in combination with hyperparameter optimization.
    \end{itemize}    
\end{mybox}
\section{Related Work}
\label{sec:related-work}
This section presents a discussion on foundational and related works on LFC~\citep{raykar2010learning} approaches for classification tasks, their experimental studies, and validation with noisy class labels in the broader area of LNL~\citep{song2022learning}. In short, this discussion confirms that most experimental studies on LFC follow different experimentation protocols, of which none consider HPO with noisy crowd-labeled validation data. Although other works on LNL address the validation issues in the presence of noisy labels in varying contexts, they do not explicitly validate with crowdworkers of varying performances.

\begin{table}[h!]
\centering
\scriptsize
\setlength{\tabcolsep}{4.85pt}
\def\arraystretch{1.0}
\caption{
    \textbf{Overview of experimental studies of LFC approaches training neural networks for classification tasks.} Each row represents one study sorted by publication years, while the columns refer to the characteristics of such a study. We denote counts by the $\#$ symbol. We account for multiple simulation methods for the same single-labeled and variants for the same crowd-labeled dataset by ($\times$ \ldots). The symbols \yestrue\ (\textBF{True Labels}) and \yesnoise\ (\textBF{Noisy Labels}) denote the validation label type, whereas \no\ indicates that the respective aspect has been ignored. If no information is available, we denote \unknown\ as a symbol.
}
\label{tab:hyperparam_overview}
\begin{tabular}{llcccccccc}
    \toprule
    \multirow{2}{*}{\textBF{Study}} & \multirow{2}{*}{\textBF{Venue}} & \multicolumn{2}{c}{\textBF{Approaches [$\#$]}} & \multicolumn{2}{c}{\textBF{Datasets [$\#$]}} & \multicolumn{2}{c}{\textBF{Hyperparameter Optimization}} & \multirow{2}{*}{\textBF{Early Stopping}} \\ 
    \cmidrule(lr){3-4} \cmidrule(lr){5-6} \cmidrule(lr){7-8}
    
    &  & \textBF{Two-stage}  & \textBF{One-stage} & \textBF{Simulated} & \textBF{Real} & \makecell{\textBF{\hspace*{.7mm} Per Dataset \hspace*{.7mm}}} & \makecell{\textBF{Per Approach}} & \\
    \midrule
    \citeauthor{rodrigues2018deep}        & AAAI & 3 & 4 & 1 & 1 & \yestrue & \no & \no \\
    \citeauthor{cao2018maxmig}           & ICLR & 1 & 4 & 3 ($\times$ 6) & 1 & \no & \no & \no \\
    \citeauthor{tanno2019learning}       & CVPR & 1 & 5 & 2 ($\times$ 2) & 0 & \no & \no & \yestrue \\
    \citeauthor{li2021trustable}         & TMM & 4 & 6 & 4 & 2 & \no & \no & \no \\ 
    \citeauthor{wei2022deep}             & TNNLS & 1 & 6 & 4 ($\times$ 4) & 2 & \no & \no & \no \\
    \citeauthor{li2022beyond}            & MLJ & 2 & 5 & 4 ($\times$ 2) & 2 & \no & \yestrue &  \yesnoise \\
    \citeauthor{herde2023multi}          & TMLR & 1 & 6 & 4 ($\times$ 4) & 2 & \yestrue & \yestrue & \yestrue \\
    \citeauthor{ibrahim2023deep}         & ICLR & 2 & 8 & 2 ($\times$ 2) & 2 & \yestrue & \yestrue & \yestrue  \\
    \citeauthor{cao2023learning}         & SIGIR & 5 & 5 & 0 & 3 & \unknown & \unknown &  \unknown \\
    \citeauthor{herde2024annot}          & ECAI & 2 & 9 & 6 & 5 & \yestrue & \no &  \yestrue \\
    \citeauthor{zhang2024coupled}        & AAAI & 1 & 6 & 2 ($\times$ 4) & 3 & \no & \no & \yestrue  \\
    \citeauthor{li2024transferring}      & TPAMI & 6 & 7 & 4 ($\times$ 5) & 3 & \no & \no & \no \\
    \citeauthor{nguyen2024noisy}         & NeurIPS & 1 & 5 & 2 ($\times$ 3) & 3 & \no & \no & \no \\ 
    \citeauthor{han2024collaborative}    & NeurIPS & 3 & 9 & 13 ($\times$ 2) & 2 & \no & \no & \no \\
    \citeauthor{guo2024learning}         & NeurIPS & 2 & 7 & 2 ($\times$ 3) & 4 & \no & \no & \yesnoise \\
    \citeauthor{herde2024annot}         & NeurIPS & 2 & 10 & 0 & 1 ($\times$ 7) & \yestrue & \no & \no \\
    \hline
    \texttt{crowd-hpo} & -- & 2 & 11 & 0 & 5 ($\times$ 7) & \yesnoise & \yesnoise & \no \\
    \bottomrule
    \end{tabular}
\end{table}


\paragraph{Learning from Crowds Approaches} Literature differs between two-stage and one-stage LFC approaches~\citep{li2022beyond}. Two-stage approaches aggregate the noisy crowd-labeled class labels per instance in the first stage and use these aggregated labels as true class label estimates for training neural networks in the second stage. The most common aggregation algorithm is majority voting (\texttt{mv}), which implicitly assumes equal performances across the crowdworkers~\citep{chen2022label,jiang2021learning}. In contrast, the Dawid-Skene algorithm (\texttt{ds},~\citeauthor{dawid1979maximum},~\citeyear{dawid1979maximum}) leverages the \textit{expectation-maximization} (EM) algorithm, where the true label probabilities are estimated in the E-step to update the crowdworkers' confusion matrices in the M-step. Typically, such label aggregation approaches only operate with the given labels as inputs~\citep{zhang2016learning} and expect more than one class label per instance~\citep{khetan2018learning}. One-stage approaches aim to overcome these limitations by jointly training a neural network for estimating the true labels and a model for evaluating the crowdworkers' performances~\citep{herde2023multi}. The latter model is often implemented as weights of noise adaptation layers~\citep{rodrigues2018deep,chu2021learning} or probabilistic confusion matrices~\citep{tanno2019learning,chu2021learning,ibrahim2023deep} to model crowdworkers' class-dependent performances. More complex models, designed as (deep) neural networks, estimate performances as a function of instances and crowdworkers~\citep{le2020disentangling,li2022beyond,cao2023learning,herde2024annot}.

\paragraph{Experimental Studies for Learning from Crowds} For a better understanding of experimenting with LFC approaches, Table~\ref{tab:hyperparam_overview} overviews and characterizes recent experimental studies of LFC approaches. Most studies focus on presenting a new LFC approach compared to state-of-the-art competitors. 
We report the number of evaluated two-stage and one-stage LFC approaches for each study. 
Here, we count individual approaches if they incorporate distinct methodological ideas. 
In addition, we report the number of datasets used in each study. We distinguish between simulated and real crowd-labeled datasets. Simulated datasets are built on top of standard single-labeled datasets, e.g., \texttt{cifar10}~\citep{krizhevsky2009learning}, by simulating the labeling process of the crowdworkers. For the simulated data, most experimental studies consider multiple single-labeled datasets and multiple simulation methods for the noisy class labels. Analog to this, multiple variants of crowd-labeled datasets can be constructed by subsampling the crowdorkers' labels, e.g., by keeping only a certain number of class labels per instance~\citep{wei2021learning,herde2024dopanim}. We take both procedures into account by denoting the product term $(\#\,\mathrm{datasets} \times \#\,\mathrm{variants})$. Central to our analysis is the handling of the HPs for the LFC approaches. Here, we note the distinction between HPO, which involves systematically searching for the best HPC, and early stopping, a regularization technique that halts training once validation performance deteriorates to prevent overfitting. If the HPO is only done for each dataset, e.g., to select the basic architecture and optimizer parameters, we set a check mark at ``per dataset''. If the HPO is only performed to select the specific HPs of an individual approach over multiple datasets, e.g., the best value for a regularization term, we set a check mark at ``per approach''. If HPO is performed for each dataset and approach, we set a check mark at ``per dataset'' and ``per approach''. If no HPO is performed, we set a cross for both columns. We also mark if noisy validation labels from crowdworkers are used for the HPO or if access to a validation set with true labels is assumed.
For those studies without any HPO, some experimentation relies on standard architectures with default HPs across their study~\citep{tanno2019learning, zhang2024coupled, li2024transferring, nguyen2024noisy, han2024collaborative, guo2024learning}. In contrast, others specify the HPs for each dataset and approach without further explanation~\citep{cao2018maxmig, li2021trustable, wei2022deep}. Several studies~\citep{tanno2019learning, herde2023multi, herde2024annot, zhang2024coupled, nguyen2024noisy, guo2024learning} provide an extra ablation study for the HPs of their own LFC approaches. 

\paragraph{Validation with Noisy Class Labels} A few works exist on different aspects of validation with noisy class labels in the broader area of LNL. \cite{chen2021robustness} theoretically prove that for diagonally-dominant confusion matrices, the validation accuracy remains a reliable indicator of true performance. However, in practice, complex types of noise can still pose challenges, especially when the noise is systematic or when not enough data are available to average it out. For example, the empirical findings of \cite{kuo2023noisy} indicate that even small amounts of (not necessarily label) noise in the validation signal can significantly degrade HPO outcomes. The observations of \cite{inouye2017hyperparameter} also confirm that standard validation can be misleading for localized, systematic label noise. Their proposed solution injects synthetic label noise into the training data (based on an estimated noise model) while keeping validation labels unchanged. This penalizes models that overfit spurious patterns and improves over standard cross-validation. \cite{guo2024learning} evaluate LFC approaches with early stopping using noisy validation data. However, no analysis regarding the effects of such an early stopping is reported. \cite{yuan2024early} also recognizes the issues of training and validating with noisy class labels in the context of early stopping. Therefore, they propose a solution for early stopping without relying on a separate validation set. However, they do not perform any HPO but focus on demonstrating their solution's robustness across different HPCs. In contrast, \cite{wang2025realistic} tackle the issue of HPO by proposing HPS criteria when learning from partial labels.

\section{Hyperparameter Optimization with Noisy Labels from Crowds}
\label{sec:approach}
This section first formalizes the problem setting and approaches to LFC, then outlines the basics of HPO, and finally introduces corresponding HPS criteria for handling noisy crowd-labeled validation data.

\subsection{Problem Setting}
\label{sec:problem-setting}
Here, we describe the data generation process to define the objective of LFC approaches.

\paragraph{Data Generation Process} Figure~\ref{fig:probabilistic-graphical-model} depicts the probabilistic graphical model of the commonly assumed data generation process in LFC settings~\citep{li2022beyond,herde2024annot}. Let the multiset \mbox{$\mathcal{X} \defas \{\boldsymbol{{x}}_n\}_{n=1}^N \subset \Omega_{X}, N \in \mathbb{N}_{\geq 1}$} denote the observed instances, which are independently drawn from $\Pr(\boldsymbol{x})$. Then, their one-hot encoded true class labels, denoted as the multiset $\mathcal{Y} \defas \{\boldsymbol{{y}}_n\}_{n=1}^{N} \subseteq \Omega_Y \defas \{\boldsymbol{e}_c\}_{c=1}^{C}, C \in \mathbb{N}_{\geq 2}$, are distributed according to $\Pr(\boldsymbol{y} \mid \boldsymbol{{x}}_n)$ and latent. Only the multiset $\mathcal{Z} \defas \{\boldsymbol{{z}}_{nm}\}_{n=1, m=1}^{N, M} \subseteq \Omega_Z \defas \Omega_Y \cup \{\boldsymbol{0}\}$ of one-hot encoded noisy class labels provided by $M \in \mathbb{N}_{\geq 2}$ independent crowdworkers is observable. Since not every crowdworker is requested to label each instance, some class labels from the crowdworkers are unobserved, denoted as an all-zero vector~$\boldsymbol{0}$. An observed class label is assumed to be drawn from the instance- and crowdworker-specific distribution $\Pr(\boldsymbol{z}_{nm} \mid \boldsymbol{{x}}_n, \boldsymbol{{y}}_n)$. 

\begin{figure}[!h]
    \centering
          

    \includegraphics[width=\textwidth]{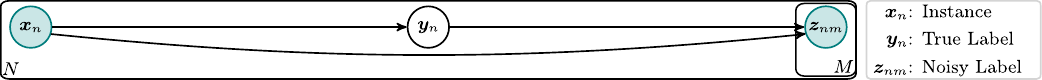}
    \caption{\textbf{Probabilistic graphical model of LFC.} Arrows show dependencies between random variables, while shaded circles indicate observed variables and unshaded latent ones.}
    \label{fig:probabilistic-graphical-model}
\end{figure}

\paragraph{Objective} LFC approaches aim to optimize the parameters\footnote{In later sections, $\boldsymbol{\theta}$ also encompasses other models' parameters of an LFC approach for ease of notation.} $\boldsymbol{\theta} \in \Omega_\Theta$ of a data classification model $\boldsymbol{f}_{\boldsymbol{\theta}}: \Omega_X \rightarrow \Delta_{C}$ by minimizing its expected risk:
\begin{equation}
    \label{eq:expected-loss}
    \boldsymbol{\theta}^\star \defas  \argmin_{\boldsymbol{\theta} \in \Omega_\Theta} \left(\mathbb{E}_{\Pr(\boldsymbol{x}, \boldsymbol{y})}\left[L\left(\boldsymbol{y}, \boldsymbol{f}_{\boldsymbol{\theta}}(\boldsymbol{x})\right)\right]\right),
\end{equation}
where $\Delta_{C}$ is a probability simplex and $L: \Delta_{C} \times \Delta_{C} \rightarrow \mathbb{R}$ denotes an appropriate loss function. Throughout this article, we employ the zero-one loss~\citep{vapnik1995the} to assess the data classification model's predictions:\footnote{The dot product of two one-hot encoded label vectors is one if and only if they represent the same class.}
\begin{align}
    \label{eq:zero-one-loss}
    L_{0/1}\left(\boldsymbol{y}, \boldsymbol{\hat{y}}\right) \defas 1 -  \left(\argmax_{\boldsymbol{e}_c \in \Omega_Y} \left(\boldsymbol{e}_c^\mathrm{T}\boldsymbol{y}\right)\right)^\mathrm{T}\left(\argmax_{\boldsymbol{e}_c \in \Omega_Y} \left(\boldsymbol{e}_c^\mathrm{T}\boldsymbol{\hat{y}}\right)\right).
\end{align}

\subsection{Approaches to Learning from Crowds}
\label{sec:approaches-to-learning-from-crowds}
Given the objective in Eq.~\eqref{eq:expected-loss}, LFC approaches do not directly optimize the outputs of the data classification model $\boldsymbol{f}_{\boldsymbol{\theta}}$ due to the lack of true labels $\mathcal{Y}$. Instead, the noisy class labels $\mathcal{Z}$ are used to train a crowdworker classification model $\boldsymbol{g}_{\boldsymbol{\theta}}: \Omega_X \times [M] \rightarrow \Delta_C$ with $[M] \defas \{1, \dots, M\}$. This model predicts the probability distribution over all class labels for each instance-crowdworker pair, where each label's value indicates the probability that the given crowdworker will assign that label to the given instance. The estimates of both classification models are typically linked through transformations based on confusion matrices~\citep{tanno2019learning} or noise adaptation layers~\citep{rodrigues2018deep}, which try to separate the crowdworkers' noise from the true class label distribution. Furthermore, such a noise separation allows defining a crowdworker performance model $h_{\boldsymbol{\theta}}: \Omega_X \times [M] \rightarrow [0, 1]$ quantifying crowdworkers' labeling accuracies. These three different models' predictions have the following probabilistic interpretations:
\begin{align}
    \label{eq:true-label-probabilities}
    \left[\boldsymbol{f}_{\boldsymbol{\theta}}(\boldsymbol{x}_n)\right]_{c} &\defas \Pr(\boldsymbol{y}_n = \boldsymbol{e}_{c} \mid \boldsymbol{x}_n, \boldsymbol{\theta}) \\
    \label{eq:noisy-label-probabilities}
    \left[\boldsymbol{g}_{\boldsymbol{\theta}}(\boldsymbol{x}_n, m)\right]_{c} &\defas \Pr(\boldsymbol{z}_{nm} = \boldsymbol{e}_{c} \mid \boldsymbol{x}_n, \boldsymbol{\theta}),\\
    \label{eq:labeling-accuracy}
    h_{\boldsymbol{\theta}}(\boldsymbol{x}_n, m) &\defas \Pr(\boldsymbol{z}_{nm}^\mathrm{T} \boldsymbol{y}_n = 1 \mid \boldsymbol{x}_n, \boldsymbol{\theta}),
\end{align}
where $[\cdot]_c$ denotes the $c$-th element of a vector. We treat all three models as black-box functions throughout the main part of this article and refer to Appendix~\ref{app:inference-overview-lfc-approaches} for an overview of the LFC approaches' concrete implementations to infer the quantities in Eqs.~\eqref{eq:true-label-probabilities}--\eqref{eq:labeling-accuracy}.

\subsection{Hyperparameter Optimization}
\label{subsec:hpo}
Given the dataset $\mathcal{D} \defas \{(\boldsymbol{x}_n, \mathcal{Z}_n)\}_{n=1}^N$ with $\mathcal{Z}_n = \{\boldsymbol{z}_{nm} \mid \boldsymbol{z}_{nm} \neq \boldsymbol{0}\}_{m=1}^{M}$ encompassing only an instance's observed class labels, a learning algorithm $\boldsymbol{A}_{\boldsymbol{\lambda}}$ (corresponding to an LFC approach) with the HPC $\boldsymbol{\lambda} \in \Omega_{\Lambda}$ outputs the optimized models' parameters, denoted as $\boldsymbol{A}_{\boldsymbol{\lambda}}(\mathcal{D}) \in \Omega_\Theta$. Each dimension in the search space $\Omega_\Lambda$ corresponds to a single HP, e.g., the number of epochs (integer), the learning rate (continuous), or the type of the optimizer (categorical). Ideally, we find the optimal HPC $\boldsymbol{\lambda^\star} \in \Omega_\Lambda$ such that our learning algorithm outputs the optimal classification model parameters (see Eq.~\eqref{eq:expected-loss}): 
\begin{equation}
    \label{eq:optimal-hpc}
    \boldsymbol{A}_{\boldsymbol{\lambda^\star}}(\mathcal{D}) = \boldsymbol{\theta}^{\star}.
\end{equation}
In practice, finding the optimal solution is difficult due to many challenges, of which two of the most critical are the following:
\begin{enumerate}[leftmargin=1.5em]
    \item[\textbf{\color{nocolor}\circled{1}}] Evaluating each HPC $\boldsymbol{\lambda} \in \Omega_\Lambda$ is computationally infeasible for a large HP search space~$\Omega_\Lambda$.
    \item[\textbf{\color{yescolor}\circled{2}}] We can only estimate the expected risk (see Eq.~\eqref{eq:expected-loss}) because $\Pr(\boldsymbol{x}, \boldsymbol{y})$ is unknown.
\end{enumerate}
In this article, we focus exclusively on challenge~{\color{yescolor}\textbf{\circled{2}}} because the risk estimation is difficult and underexplored with only access to crowd-labeled validation data (see Section~\ref{sec:related-work}). Challenge~{\color{nocolor}\textbf{\circled{1}}} is not part of our contributions. Instead, we briefly review established solutions so the two challenges provide the context for the HPO loop shown in Figure \ref{fig:hyperparameter-optimization-loop}.
\begin{figure}[!h]
    \centering
    \includegraphics[width=\textwidth]{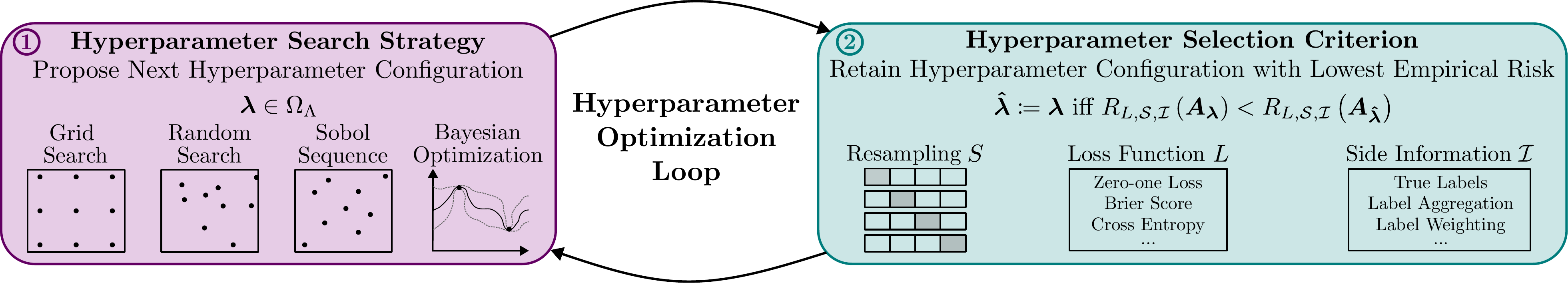}
    \caption{\textbf{HPO loop.} In an iterative process, HPO techniques explore the HP search space by retraining and evaluating the learning algorithm with different HPCs.}
    \label{fig:hyperparameter-optimization-loop}
\end{figure}

\paragraph{\color{nocolor} \protect\circled{1} Hyperparameter Search Strategy} In supervised learning with access to true labels, research focuses primarily on improving the search strategy for (iteratively) proposing a set of candidate HPCs $\Lambda \subset \Omega_\Lambda$ by balancing the exploration-exploitation trade-off within the HP search space~$\Omega_\Lambda$ given a budget of $|\Lambda| \in \mathbb{N}_{\geq 0}$ evaluated HPCs. For this purpose, random search is a popular choice that samples HP values randomly from predefined ranges, often outperforming exhaustive grid search in high-dimensional spaces~\citep{bergstra2012random}. Meanwhile, Sobol sequences~\citep{sobol1998quasi} and Bayesian optimization~\citep{wang2023recent} guide the search of candidate HPCs even more efficiently. Bayesian optimization usually excels because it adaptively selects each new HPC. However, we employ Sobol sequences so that every HPS criterion evaluates the same set $\Lambda$ of candidate HPCs. Hence, any performance differences between the HPS criteria come solely from choosing different HPCs.

\paragraph{\color{yescolor} \protect\circled{2}  Hyperparameter Selection Criterion} Ideally, an HP search strategy has access to a reliable empirical risk estimation~\citep{vapnik1995the}, which assigns a learning algorithm a scalar value $R_{L, \mathcal{S}, \mathcal{I}}(\boldsymbol{A}_{\boldsymbol{\lambda}}) \in \mathbb{R}$. Thereby, $L$ denotes the loss function (see Eq.~\eqref{eq:zero-one-loss}), $S$ a resampling technique, and $\mathcal{I}$ additional side information. Formally, we represent a resampling technique, e.g., hold-out, cross-validation, or bootstrapping, through a set of $K \in \mathbb{N}_{\geq 1}$ disjoint training ($\mathcal{T}$) and validation ($\mathcal{V}$) splits of the full training set~$\mathcal{D}$:
\begin{equation}
    \mathcal{S} \defas \{(\mathcal{T}_k, \mathcal{V}_k) \mid \mathcal{T}_k \cup \mathcal{V}_k = \mathcal{D} \wedge \mathcal{T}_k \cap \mathcal{V}_k = \emptyset\}_{k=1}^K.
\end{equation}
Side information $\mathcal{I}$ encompasses all required inputs beyond the loss function $L$ and resampling technique $S$ for computing the empirical risk. 
Based on such empirical risk estimates, we define an HPS criterion as a rule picking the evaluated HPC with the lowest empirical risk:
\begin{equation}
\label{eq:hpc-selection}
    \boldsymbol{\hat{\lambda}} \defas \arg\min_{\boldsymbol{\lambda} \in \Lambda} \left(R_{L, \mathcal{S}, \mathcal{I}}(\boldsymbol{A}_{\boldsymbol{\lambda}})\right).
\end{equation}
For example, suppose the true class labels are obtained from an expert as side information such that $\mathcal{I} \defas \mathcal{Y}$. Then, the true empirical risk of the learning algorithm~$\boldsymbol{A}_{\boldsymbol{\lambda}}$ is computed as:
\begin{equation}
    \label{eq:true-empirical-risk}
    R_{L, \mathcal{S}, \mathcal{Y}}(\boldsymbol{A}_{\boldsymbol{\lambda}}) \defas \sum_{(\mathcal{T}_k, \mathcal{V}_k) \in \mathcal{S}} \sum_{(\boldsymbol{x}_n, \mathcal{Z}_n) \in \mathcal{V}_k} \frac{1}{K \cdot |\mathcal{V}_k|}L\left(\boldsymbol{y}_n, \boldsymbol{f}_{\boldsymbol{A}_{\boldsymbol{\lambda}}(\mathcal{T}_k)}(\boldsymbol{x}_n)\right).
\end{equation}
Since we have only access to noisy crowd-labeled validation data in an LFC setting, the HPS criterion based on the true empirical risk $R_{L, \mathcal{S}, \mathcal{Y}}$ (see Eq.~\eqref{eq:true-empirical-risk}) represents our upper baseline criterion for HPO, denoted as \textsc{true}. In contrast, our lower baseline criterion \textsc{def} constantly outputs a default HPC $\boldsymbol{\lambda}_\textsc{def} \in \Omega_{\Lambda}$, which corresponds to a naive risk estimation using the default HPC as side information such that $\mathcal{I} \defas \boldsymbol{\lambda}_{\textsc{def}}$:
\begin{equation}
    \label{eq:naive-empirical-risk}
    R_{L, \mathcal{S}, \boldsymbol{\lambda}_\textsc{def}}(\boldsymbol{A}_{\boldsymbol{\lambda}}) \defas \delta(\boldsymbol{\lambda}_\textsc{def} \neq \boldsymbol{\lambda}),
\end{equation}
where $\delta: \{\mathrm{false}, \mathrm{true}\} \rightarrow \{0, 1\}$ denotes an indicator function.

\subsection{Hyperparameter Selection Criteria for Crowd-labeled Validation Data}
\label{subsec:hps-criteria}
Given crowd-labeled validation data, we introduce multiple HPS criteria that differ in their empirical risk estimation type, assumptions about crowdworker performances, and label weighting. Figure~\ref{fig:model-selection-criteria-overview}~provides a corresponding overview of a hierarchical framework, whose four levels are detailed in the following.

\begin{figure}[!h]
    \centering
    \includegraphics[width=\linewidth]{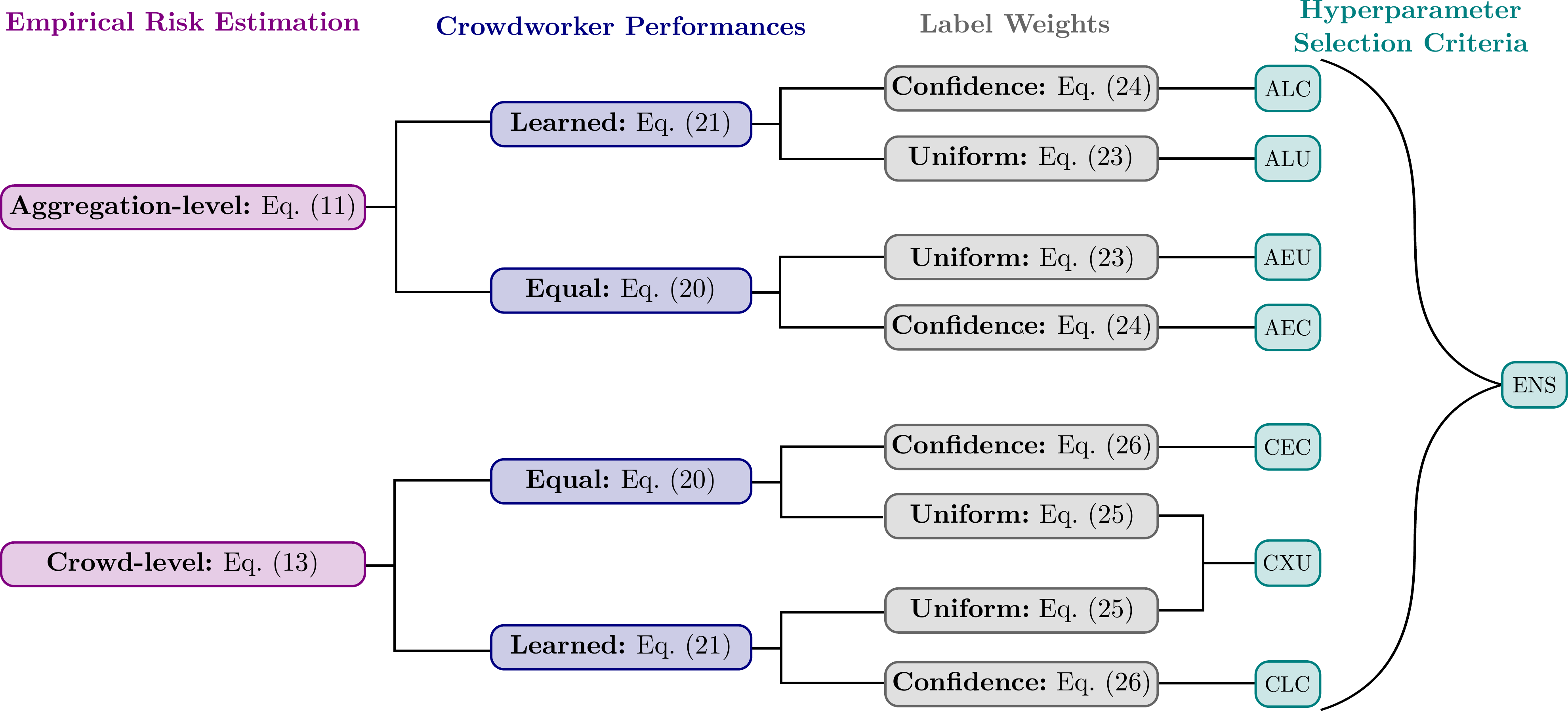}
    \caption{
        \textbf{HPS criteria for noisy crowd-labeled data.} The two trees illustrate two risk estimation types. Every leaf marks the concrete criterion reached by the respective path, and the criterion's name is simply the abbreviation obtained by concatenating the first letters of the nodes along that path. The abbreviation \textsc{c\underline{x}u} indicates that \textsc{c\underline{e}u} and \textsc{c\underline{l}u} are identical. The ensemble-based criterion \textsc{ens} combines all criteria for improved robustness.
    }
    \label{fig:model-selection-criteria-overview}
\end{figure}

\paragraph{{\color{nocolor}Empirical Risk Estimation Type}} LFC approaches typically involve the joint training of the data classification model~$\boldsymbol{f}_{\boldsymbol{\theta}}$ and the crowdworker classification model~$\boldsymbol{g}_{\boldsymbol{\theta}}$ (see Section~\ref{sec:approaches-to-learning-from-crowds}), whose predictive performances capture distinct facets of the training outcome. On the one hand, we assess the data classification model~$\boldsymbol{f}_{\boldsymbol{\theta}}$ by computing the \textbf{aggregation-level} empirical risk through:
\begin{align}
    \label{eq:aggregation-level-risk}
    R_{L, \mathcal{S}, \{\boldsymbol{\overline{z}}, w\}}(\boldsymbol{A}_{\boldsymbol{\lambda}}) &\defas  \sum_{(\mathcal{T}_k, \mathcal{V}_k) \in \mathcal{S}} \sum_{(\boldsymbol{x}_n, \mathcal{Z}_n) \in \mathcal{V}_k} \frac{w(\boldsymbol{x}_n, \mathcal{Z}_n) }{K \cdot W_k} L\left(\boldsymbol{\overline{z}}(\boldsymbol{x}_n, \mathcal{Z}_n), \boldsymbol{f}_{\boldsymbol{A}_{\boldsymbol{\lambda}}(\mathcal{T}_k)}(\boldsymbol{x}_n)\right) \text{ with} \\
    W_k &\defas \sum_{(\boldsymbol{x}_n, \mathcal{Z}_n) \in \mathcal{V}_k} w(\boldsymbol{x}_n, \mathcal{Z}_n),
\end{align}
where the two functions in $\mathcal{I} \defas \{\boldsymbol{\overline{z}}, w\}$ denote the side information. The label aggregation function $\boldsymbol{\overline{z}}: \Omega_X \times \mathcal{P}\left(\Omega_Y\right) \rightarrow \Delta_C$ aims to infer the latent true class labels with $\mathcal{P}\left(\Omega_Y\right)$ referring to the power set of multisets constructed from the class labels in $\Omega_Y$. The label weighting function $w: \Omega_X \times \mathcal{P}\left(\Omega_Y\right) \rightarrow \mathbb{R}_{\geq0}$ weights individual loss contributions of the aggregated class labels. 
On the other hand, we compute the \textbf{crowd-level} empirical risk to assess the crowdworker classification model~$\boldsymbol{g}_{\boldsymbol{\theta}}$ through:
\begin{align}
    \label{eq:crowd-level-risk}
    R_{L, \mathcal{S}, v}(\boldsymbol{A}_{\boldsymbol{\lambda}}) &\defas  \sum_{(\mathcal{T}_k, \mathcal{V}_k) \in \mathcal{S}} \sum_{(\boldsymbol{x}_n, \mathcal{Z}_n) \in \mathcal{V}_k} \sum_{\boldsymbol{z}_{nm} \in \mathcal{Z}_n} \frac{v(\boldsymbol{x}_n, m, \mathcal{Z}_n)}{K \cdot V_k} L\left(\boldsymbol{z}_{nm}, \boldsymbol{g}_{\boldsymbol{A}_{\boldsymbol{\lambda}}(\mathcal{T}_k)}(\boldsymbol{x}_n, m)\right) \text{ with} \\
    V_k &\defas \sum_{(\boldsymbol{x}_n, \mathcal{Z}_n) \in \mathcal{V}_k} \sum_{\boldsymbol{z}_{nm} \in \mathcal{Z}_n} v(\boldsymbol{x}_n, m, \mathcal{Z}_n), 
\end{align}
where the label weighting function $v: \Omega_X \times [M] \times \mathcal{P}\left(\Omega_Y\right) \rightarrow \mathbb{R}_{\geq 0}$ as side information $\mathcal{I} \defas v$ weights the individual loss contributions of the crowdworkers' labels. Intuitively, both label weighting functions are to downweight the influence of labels that are likely false, allowing the more reliable ones to dominate the loss and thereby enhancing robustness to label noise.

\paragraph{{\color{navy_accent}Crowdworker Performance}} Starting from different assumptions about crowdworker performances, we estimate true class label probabilities as a basis for defining the label aggregation function $\boldsymbol{\overline{z}}$ and the label weighting functions $w, v$. For this purpose, let us assume that we have the estimated confusion probabilities $\widehat{\Pr}(\boldsymbol{z}_{nm} \mid \boldsymbol{x}_n, \boldsymbol{y}_n)$ for each crowdworker $m \in [M]$ and instance $\boldsymbol{x}_n \in \mathcal{X}$. Then, we estimate the posterior true class label probabilities, i.e., after observing the crowdworkers' labels $\mathcal{Z}_n$, through:
\begin{align}
    \widehat{\Pr}(\boldsymbol{y}_n = \boldsymbol{e}_c \mid \boldsymbol{x}_n, \mathcal{Z}_n) 
    &\stackrel{(\star)}{\propto} \widehat{\Pr}\left(\boldsymbol{y}_n=\boldsymbol{e}_c \mid \boldsymbol{x}_n\right) \widehat{\Pr}(\mathcal{Z}_n \mid \boldsymbol{x}_n, \boldsymbol{y}_n = \boldsymbol{e}_c) \label{eq:bayes-theorem}\\
    &\stackrel{(\dagger)}{=}  \widehat{\Pr}\left(\boldsymbol{y}_n=\boldsymbol{e}_c \mid \boldsymbol{x}_n\right) \prod_{\boldsymbol{z}_{nm} \in \mathcal{Z}_n} \widehat{\Pr}(\boldsymbol{z}_{nm} \mid \boldsymbol{x}_n, \boldsymbol{y}_n = \boldsymbol{e}_c)\label{eq:independence},
\end{align}
where the transformation $(\star)$ corresponds to Bayes' theorem and $(\dagger)$ to the assumed crowdworkers' independence. If we employed the data classification model~$\boldsymbol{f}_{\boldsymbol{\theta}}$ to predict $\widehat{\Pr}\left(\boldsymbol{y}_n \mid \boldsymbol{x}_n\right)$, we would leak its information into the posterior true class label probability estimation (see Proposition~\ref{prop:uniform-class-probabilities} in Appendix~\ref{subapp:prior-class-probabilities}), which can be interpreted as a form of double-dipping~\citep{ball2020double}, making associated risk estimates over-optimistic. Instead, we assume uniform prior class probabilities, i.e., before observing any crowdworkers' labels~$\mathcal{Z}_n$: 
\begin{equation}
    \label{eq:simplified-class-probabilities}
    \widehat{\Pr}\left(\boldsymbol{y}_n = \boldsymbol{e}_c \mid \boldsymbol{x}_n\right) \defas \frac{1}{C}.
\end{equation}
If we employed the full crowdworkers' confusion probability estimates $\widehat{\Pr}(\boldsymbol{z}_{nm} \mid \boldsymbol{x}_n, \boldsymbol{\hat{y}}_n)$, the posterior computation would be vulnerable to class-specific biases (see Proposition~\ref{prop:crowdworker-confusion-probabilities} in Appendix~\ref{subapp:crowdworker-probabilities}). Hence, we resort to a Bernoulli model requiring only each crowdworker's instance-wise performance estimate $\widehat{\Pr}(\boldsymbol{z}_{nm}^\mathrm{T}\boldsymbol{y}_n = 1 \mid \boldsymbol{x}_n)$ such that:
\begin{gather}
    \label{eq:simplified-performance}
    \widehat{\Pr}(\boldsymbol{z}_{nm} = \boldsymbol{e}_k \mid \boldsymbol{x}_n, \boldsymbol{y}_n = \boldsymbol{e}_c) \defas \widehat{\Pr}(\boldsymbol{z}_{nm}^\mathrm{T}\boldsymbol{y}_n = 1 \mid \boldsymbol{x}_n)^{\boldsymbol{e}_k^\mathrm{T}\boldsymbol{e}_c} \left(\frac{\widehat{\Pr}(\boldsymbol{z}_{nm}^\mathrm{T}\boldsymbol{y}_n = 0 \mid \boldsymbol{x}_n)}{C-1}\right)^{1 - \boldsymbol{e}_k^\mathrm{T}\boldsymbol{e}_c}.
\end{gather}
By summarizing each crowdworker's behavior on a given instance with one scalar performance value, the Bernoulli model applies the same performance value to every class. All classes are, therefore, treated identically, and no systematic bias toward any particular class arises. As a result of Eq.~\eqref{eq:simplified-class-probabilities} and Eq.~\eqref{eq:simplified-performance}, the posterior estimation from Eq.~\eqref{eq:independence} reduces to:
\begin{align}
    \label{eq:simplified-class-posteriors}
    \widehat{\Pr}(\boldsymbol{y}_n = \boldsymbol{e}_c \mid \boldsymbol{x}_n, \mathcal{Z}_n) \propto \hspace{-0.5em}\prod_{\boldsymbol{z}_{nm} \in \mathcal{Z}_n} \hspace{-0.5em}   
    \widehat{\Pr}(\boldsymbol{z}_{nm}^\mathrm{T}\boldsymbol{y}_n = 1 \mid \boldsymbol{x}_n)^{\boldsymbol{z}_{nm}^\mathrm{T} \boldsymbol{e}_c}
    \left(\frac{\widehat{\Pr}(\boldsymbol{z}_{nm}^\mathrm{T}\boldsymbol{y}_n = 0 \mid \boldsymbol{x}_n)}{C-1}\right)^{1-\boldsymbol{z}_{nm}^\mathrm{T}\boldsymbol{e}_c},
\end{align}
where we distinguish between:
\begin{align}
    \label{eq:uniform-crowdworker-performances}
    \widehat{\Pr}(\boldsymbol{z}_{nm}^\mathrm{T}\boldsymbol{y}_n = 1 \mid \boldsymbol{x}_n) &\defas p \in (\nicefrac{1}{C}, 1] &\text{as }\textbf{equal } \text{crowdworker performances,} \\
    \widehat{\Pr}(\boldsymbol{z}_{nm}^\mathrm{T}\boldsymbol{y}_n = 1 \mid \boldsymbol{x}_n) &\defas h_{\boldsymbol{A}_{\boldsymbol{\lambda}}(\mathcal{T}_k)}(\boldsymbol{x}_n, m) &\text{as }\textbf{learned } \text{crowdworker performances.}
\end{align}
In the first case, the exact value of the labeling accuracy $p$ is irrelevant. Instead, $p$ only encodes the assumption that each crowdworker has the same performance across all instances, which is better than randomly guessing. In the second case, the instance-wise performances are estimated by the crowdworker performance model $h_{\boldsymbol{A}_{\boldsymbol{\lambda}}(\mathcal{T}_k)}$ obtained after training with the respective LFC approach $\boldsymbol{A}_{\boldsymbol{\lambda}}$ on the $k$-th training fold using the candidate HPC $\boldsymbol{\lambda}$. In both cases, the label aggregation function $\boldsymbol{\overline{z}}$ outputs the \textit{maximum a posteriori} (MAP) estimate of the true class label:
\begin{equation}
    \label{eq:aggregation-function}
    \boldsymbol{\overline{z}}(\boldsymbol{x}_n, \mathcal{Z}_n) \defas \argmax_{\boldsymbol{e}_c \in \Omega_Y} \left(\widehat{\Pr}\left(\boldsymbol{y}_n = \boldsymbol{e}_c \mid \boldsymbol{x}_n, \mathcal{Z}_n\right)\right).
\end{equation}
When all crowd workers are assumed to perform equally, the MAP estimate reduces to simple majority voting. In contrast, it naturally becomes weighted majority voting once their performances are learned (see Proposition~\ref{prop:weighted-majority-voting} in Appendix~\ref{subapp:weighted-majority-vote}).

\paragraph{{\color{partialcolor}Label Weights}} The label weighting functions control a class label's impact on the risk estimate. A uniform weighting gives every label the same weight, whereas confidence weighting scales a label's weight in proportion to its estimated correctness probability. Accordingly, the label weighting functions for aggregation-level risk estimation take the forms:
\begin{align}
    \label{eq:uniform-aggregation-weights}
    w(\boldsymbol{x}_n, \mathcal{Z}_n) &\defas 1 & \text{as }\textbf{uniform } \text{weighting},  \\
    \label{eq:confidence-aggregation-weights}
    w(\boldsymbol{x}_n, \mathcal{Z}_n) &\defas \max_{\boldsymbol{e}_c \in \Omega_Y} \left(\widehat{\Pr}\left(\boldsymbol{y}_n = \boldsymbol{e}_c \mid \boldsymbol{x}_n, \mathcal{Z}_n\right)\right) &\text{as }\textbf{confidence } \text{weighting},
\end{align}
whereas the label weighting functions for the crowd-level risk take the forms:
\begin{align}
    \label{eq:uniform-crowd-weights}
    v(\boldsymbol{x}_n, m, \mathcal{Z}_n) &\defas 1 & \text{as }\textbf{uniform } \text{weighting},  \\
    \label{eq:confidence-crowd-weights}
    v(\boldsymbol{x}_n, m, \mathcal{Z}_n) &\defas  \widehat{\Pr}\left(\boldsymbol{y}_n = \boldsymbol{z}_{nm} \mid \boldsymbol{x}_n, \mathcal{Z}_n\right) &\text{as }\textbf{confidence } \text{weighting}.
\end{align}

\paragraph{{\color{yescolor}Hyperparameter Selection Criteria}} The different combinations of risk estimation type, crowdworker performance estimation, and label weighting procedure correspond to $J=7$ distinct HPS criteria with their associated risk measures $\mathcal{R} \defas \{R_{L, \mathcal{S}, \mathcal{I}_j}\}_{j=1}^{J}$. For example, the criterion \textsc{aeu} combining Eq.~\eqref{eq:aggregation-level-risk}, Eq.~\eqref{eq:uniform-crowdworker-performances}, and Eq.~\eqref{eq:uniform-aggregation-weights} corresponds to the risk estimation with plain majority vote labels for the validation instances. Each of the empirical risk measures n $\mathcal{R}$ may be different from the true empirical risk (see Eq.~\eqref{eq:true-empirical-risk}):
\begin{equation}
    R_{L, \mathcal{S}, \mathcal{I}_j}(\boldsymbol{A}_{\boldsymbol{\lambda}}) = R_{L,\mathcal{S},\mathcal{Y}}(\boldsymbol{A}_{\boldsymbol{\lambda}}) + \epsilon_{L, \mathcal{S}, \mathcal{Y}, \mathcal{I}_j}\left(\boldsymbol{A}_{\boldsymbol{\lambda}}\right) \text{ with } \epsilon_{L, \mathcal{S}, \mathcal{Y}, \mathcal{I}_j}\left(\boldsymbol{A}_{\boldsymbol{\lambda}}\right) \in \mathbb{R}.
\end{equation}
The differences arise through multiple sources, e.g., imperfect label aggregation, imprecise crowdworker performance estimates, or even different target models. For example, the crowd-level risk estimation measures the risk of the crowdworker classification model, which is different from estimating the risk of the data classification model as in Eq.~\eqref{eq:true-empirical-risk}. Because of these issues, we propose combining our empirical risk measures into an ensemble-based selection criterion \textsc{ens}. For this purpose, we adopt the classic Borda count~\citep{borda1781}, a rank-aggregation rule widely used in robust meta-evaluation~\citep{abdulrahman2018speeding}. Let $\mathcal{O} \defas \{\boldsymbol{o}_j: \Lambda \rightarrow \{1, \dots, \Lambda\}]\}_{j=1}^J$ denote the set of ranking functions such that:
\begin{equation}
    \boldsymbol{o}_j(\boldsymbol{\lambda}) \defas 1 + \sum_{\boldsymbol{\lambda^\prime} \in \Lambda} \delta(R_{L, \mathcal{S}, \mathcal{I}_j}(\boldsymbol{A}_{\boldsymbol{\lambda}}) > R_{L, \mathcal{S}, \mathcal{I}_j}(\boldsymbol{A}_{\boldsymbol{\lambda^\prime}})).
\end{equation}
Accordingly, $\boldsymbol{o}_j(\boldsymbol{\lambda}) = 1$ corresponds to the HPC with the lowest empirical risk, and $\boldsymbol{o}_j(\boldsymbol{\lambda}) = |\Lambda|$ to the one with the highest risk estimate. The empirical risk estimate based on the Borda count is then defined as:
\begin{equation}
    R_{L, \mathcal{S}, \mathcal{O}}(\boldsymbol{A}_{\boldsymbol{\lambda}}) \defas \sum_{\boldsymbol{o}_j \in \mathcal{O}} \boldsymbol{o}_j(\boldsymbol{\lambda}),
\end{equation}
where the ranking functions serve as our side information such that $\mathcal{I} \defas \mathcal{O}$.
Intuitively, summing rankings stabilizes decisions by balancing individual biases of each noisy risk measure in $\mathcal{R}$, increasing the likelihood of choosing a robust HPC.

\section{Experimental Study}
\label{sec:empirical_evaluation}
This section starts with a comprehensive description of our experimental setup. Subsequently, we analyze our experimental results to answer $RQ_1$ and $RQ_2$ as our two central research questions. The corresponding answers serve as a basis for formulating recommendations to design realistic and fair experiments when benchmarking LFC approaches in the future.

\subsection{Experimental Setup}
Our setup covers datasets, neural network architectures, LFC approaches, HP search, and HPS criteria. We describe our design choices for each of these aspects in the following.

\paragraph{Datasets} Realistic datasets are a requirement for a meaningful evaluation of LFC approaches. Therefore, we rely only on real-world datasets annotated by error-prone humans, mostly actual crowdworkers. Table~\ref{tab:datasets} overviews these datasets by detailing their key attributes. 
The dataset \texttt{mgc}~\citep{tzanetakis2002musical} originally contains $\si{30}{s}$ audio files of songs to be classified according to their music genres. 
A subset of the well-known image benchmark dataset \texttt{label-me}~\citep{russell2008labelme} concerns the classification of scenes,
while the dataset \texttt{dopanim}~\citep{herde2024dopanim} targets the classification of doppelganger (groups of highly similar) animals.
There are two text datasets, which are a subset of the dataset~\texttt{reuters}~\citep{reuters19878lewis} for news article classification and a subset of the dataset \texttt{spc}~\citep{pang2005seeing} for sentiment polarity classification of movie reviews. Based on large sets of noisy labels resulting from the datasets' labeling campaigns, we follow the ideas of \cite{wei2021learning} and \cite{herde2024dopanim} by introducing variants of these noisy label sets. 
These variants simulate different levels of crowdworker performance and varying amounts of label redundancy. For each instance, we either retain only the labels produced by the \texttt{worst} crowdworkers, i.e., we keep false labels if any are available, or we select labels uniformly at \texttt{rand}om. The suffixes \texttt{-1}, \texttt{-2}, and \texttt{-v} then specify how many labels to keep per instance: exactly one, exactly two, or a variable number, respectively. Because we preserve only a subset of all submitted labels, the total number of crowdworkers contributing labels can change across variants. The variant \texttt{full} refers to the originally published set of class labels from crowdworkers. Together, these dataset variants cover a wide range of different LFC settings. Concretely, the number of crowdworkers ranges from small groups of $M=20$ people to a large group of $M=203$ people. Label noise, defined as the fraction of erroneous labels from crowdworkers, ranges from approximately \SIrange{20}{87}{\percent}. When these labels are aggregated via majority voting, the resulting aggregation noise, i.e., the fraction of aggregated labels that are incorrect, ranges from circa \SIrange{11}{87}{\percent}. Finally, the dataset variants encompass scenarios ranging from no label redundancy, i.e., only one class label per instance, to those exhibiting substantial label redundancy, i.e., over five class labels per instance. 

\begin{table}[!t]
    \centering
    \scriptsize
    \setlength{\tabcolsep}{0pt}
    \def\arraystretch{1.0}
    \caption{
        \textbf{Dataset overview.} The first column indicates the names of the datasets, while the remaining columns refer to the datasets' attributes. We denote counts by the $\#$ symbol, fractions by the $\%$ symbol, and means are supplemented by standard deviations.
    }
    \label{tab:datasets}
    \begin{tabular*}{\linewidth}{@{\extracolsep{\fill}} lccccccccc }
        \toprule
        \multirow{2}{*}{\textBF{Dataset}} & \multirow{2}{*}{\textBF{Variant}} & \textBF{Labeling} & \textBF{Training} & \textBF{Test} & \multirow{2}{*}{\textBF{Classes}} & \multirow{2}{*}{\textBF{Workers}} & \textBF{Labels per} & \textBF{Label} & \textBF{Aggregation} \\ 
        &  & \textBF{Campaign} & \textBF{Instances} & \textBF{Instances} &  &  & \textBF{Instance} &  \textBF{Noise} &  \textBF{Noise} \\
        & & & \textBF{[$\#$]} & \textBF{[$\#$]} & \textBF{[$\#$]} & \textBF{[$\#$]} & \textBF{[$\#$]} &  \textBF{[$\%$]} &  \textBF{[$\%$]} \\
        \hline
        \rowcolor{yescolor!10} \multicolumn{10}{c}{Audio Data}\\
        \hline
        \multirow{7}{*}{\texttt{mgc}}      
        & \texttt{worst-1}
        & \multirow{7}{*}{\citeauthor{rodrigues2013learning}}
        & \multirow{7}{*}{$700$}
        & \multirow{7}{*}{$300$}
        & \multirow{7}{*}{$10$}
        & $32$
        & $1.0_{\pm 0.0}$
        & $87.4$ 
        & $87.4$ \\
        &
        \texttt{worst-2}      
        & 
        & 
        & 
        & 
        & $37$
        & $1.9_{\pm 0.3}$
        & $72.5$ 
        & $69.4$ \\
        &
        \texttt{worst-v}      
        & 
        & 
        & 
        & 
        & $42$
        & $2.5_{\pm 1.6}$
        & $59.2$ 
        & $58.6$ \\
        &
        \texttt{rand-1}      
        & 
        & 
        & 
        & 
        & $37$
        & $1.0_{\pm 0.0}$
        & $47.1$ 
        & $47.1$ \\
        &
        \texttt{rand-2}      
        & 
        & 
        & 
        & 
        & $43$
        & $1.9_{\pm 0.3}$
        & $45.7$ 
        & $43.9$ \\
        &
        \texttt{rand-v}      
        & 
        & 
        & 
        & 
        & $43$
        & $2.6_{\pm 1.6}$
        & $44.6$ 
        & $38.3$ \\
        &
        \texttt{full}      
        & 
        & 
        & 
        & 
        & $44$
        & $4.2_{\pm 2.0}$
        & $44.0$ 
        & $30.3$ \\
        \hline
        \rowcolor{yescolor!10} \multicolumn{10}{c}{Image Data}\\
        \hline
        \multirow{7}{*}{\texttt{label-me}}
        & \texttt{worst-1}
        & \multirow{7}{*}{\citeauthor{rodrigues2017learning}}
        & \multirow{7}{*}{$1{,}000$}
        & \multirow{7}{*}{$1{,}188$}
        & \multirow{7}{*}{$8$}
        & $57$
        & $2.5_{\pm 0.6}$
        & $41.1$
        & $41.1$ \\
        &
        \texttt{worst-2}  
        & 
        & 
        & 
        & 
        & $59$
        & $2.0_{\pm 0.2}$
        & $30.8$ 
        & $30.1$ \\
        &
        \texttt{worst-v}  
        & 
        & 
        & 
        & 
        & $59$
        & $1.8_{\pm 0.8}$
        & $31.6$ 
        & $32.5$ \\
        &
        \texttt{rand-1}  
        & 
        & 
        & 
        & 
        & $57$
        & $1.0_{\pm 0.0}$
        & $23.9$ 
        & $23.9$ \\
        &
        \texttt{rand-2}  
        & 
        & 
        & 
        & 
        & $59$
        & $2.0_{\pm 0.2}$
        & $25.5$ 
        & $25.7$ \\
        &
        \texttt{rand-v}  
        & 
        & 
        & 
        & 
        & $59$
        & $1.8_{\pm 0.8}$
        & $25.5$ 
        & $25.0$ \\
        &
        \texttt{full}  
        & 
        & 
        & 
        & 
        & $59$
        & $2.5_{\pm 0.6}$
        & $26.0$ 
        & $23.7$ \\
        \hline
        \multirow{7}{*}{\texttt{dopanim}}
        & \texttt{worst-1}
        & \multirow{7}{*}{\citeauthor{herde2024dopanim}}
        & \multirow{7}{*}{$10{,}484$}
        & \multirow{7}{*}{$4{,}500$}
        & \multirow{7}{*}{$15$}
        & \multirow{7}{*}{$20$}
        & $1.0_{\pm 0.0}$
        & $77.6$ 
        & $77.6$ \\
        &
        \texttt{worst-2}  
        & 
        & 
        & 
        & 
        & 
        & $2.0_{\pm 0.0}$
        & $62.7$
        & $62.2$ \\
        &
        \texttt{worst-v}  
        & 
        & 
        & 
        & 
        & 
        & $3.0_{\pm 1.4}$
        & $45.2$
        & $46.9$ \\
        &
        \texttt{rand-1}  
        & 
        & 
        & 
        & 
        & 
        & $1.0_{\pm 0.0}$
        & $32.5$ 
        & $32.5$ \\
        &
        \texttt{rand-2}  
        & 
        & 
        & 
        & 
        & 
        & $2.0_{\pm 0.0}$
        & $32.8$
        & $33.2$ \\
        &
        \texttt{rand-v}  
        & 
        & 
        & 
        & 
        & 
        & $3.0_{\pm 1.4}$
        & $32.7$
        & $26.3$ \\
        &
        \texttt{full}  
        & 
        & 
        & 
        & 
        & 
        & $5.0_{\pm 0.2}$
        & $32.7$ 
        & $19.3$ \\
        \hline
        \rowcolor{yescolor!10} \multicolumn{10}{c}{Text Data}\\
        \hline
        \multirow{7}{*}{\texttt{reuters}}
        & \texttt{worst-1}      
        & \multirow{7}{*}{\citeauthor{rodrigues2017learning}}
        & \multirow{7}{*}{$1{,}786$}
        & \multirow{7}{*}{$4{,}217$}
        & \multirow{7}{*}{$8$}
        & \multirow{7}{*}{$38$}
        & $1.0_{\pm 0.0}$
        & $69.2$ 
        & $69.2$ \\
        &
        \texttt{worst-2}      
        & 
        & 
        & 
        & 
        & 
        & $2.0_{\pm 0.2}$
        & $54.0$ 
        & $54.0$ \\
        &
        \texttt{worst-v}      
        & 
        & 
        & 
        & 
        & 
        & $2.0_{\pm 1.0}$
        & $50.8$ 
        & $51.8$ \\
        &
        \texttt{rand-1}      
        & 
        & 
        & 
        & 
        & 
        & $1.0_{\pm 0.0}$
        & $38.5$
        & $38.5$ \\
        &
       \texttt{rand-2}      
        & 
        & 
        & 
        & 
        & 
        & $2.0_{\pm 0.2}$
        & $39.9$ 
        & $40.9$ \\
        &
        \texttt{rand-v}      
        & 
        & 
        & 
        & 
        & 
        & $2.0_{\pm 1.0}$
        & $40.8$ 
        & $38.4$ \\
        &
        \texttt{full}      
        & 
        & 
        & 
        & 
        & 
        & $3.0_{\pm 1.0}$
        & $40.4$ 
        & $35.5$ \\
        \hline
        \multirow{7}{*}{\texttt{spc}}
        & \texttt{worst-1}
        & \multirow{7}{*}{\citeauthor{rodrigues2013learning}}
        & \multirow{7}{*}{$3{,}000$}
        & \multirow{7}{*}{$1{,}999$}
        & \multirow{7}{*}{$2$}
        & $185$
        & $1.0_{\pm 0.0}$
        & $63.4$ 
        & $63.4$ \\
        &
        \texttt{worst-2}
        & 
        & 
        & 
        & 
        & $199$
        & $2.0_{\pm 0.0}$
        & $47.1$ 
        & $47.0$ \\
        &
        \texttt{worst-v}
        & 
        & 
        & 
        & 
        & $202$
        & $3.2_{\pm 1.6}$
        & $31.6$ 
        & $32.5$ \\
        &
        \texttt{rand-1}
        & 
        & 
        & 
        & 
        & $184$
        & $1.0_{\pm 1.0}$
        & $21.2$ 
        & $21.2$ \\
        &
        \texttt{rand-2}
        & 
        & 
        & 
        & 
        & $200$
        & $2.0_{\pm 0.0}$
        & $20.8$ 
        & $20.6$ \\
        &
        \texttt{rand-v}
        & 
        & 
        & 
        & 
        & $202$
        & $3.3_{\pm 1.6}$
        & $21.1$ 
        & $14.9$ \\
        &
        \texttt{full}
        & 
        & 
        & 
        & 
        & $203$
        & $5.5_{\pm 0.7}$
        & $20.9$ 
        & $11.0$ \\
        \bottomrule
    \end{tabular*}
\end{table}

\paragraph{Neural Network Architectures} The original audio files are unavailable for the crowd-labeled dataset \texttt{mgc}. Instead, \cite{rodrigues2013learning} published features extracted via a music information retrieval tool. Similarly, only term counts published by \cite{rodrigues2017learning} are available for the crowd-labeled dataset \texttt{reuters} for which we apply a \textit{term frequency-inverse document frequency} (TF-IDF) transformation. As a result, the instances for these two datasets correspond to plain feature vectors. Thus, we employ \textit{multi-layer perceptrons} (MLPs) as the data classification model~$\boldsymbol{f}_{\boldsymbol{\theta}}$. Apart from the input dimension, which depends on the respective dataset, the MLPs share two hidden layers (256 and 128 neurons) enhanced by batch normalization~\citep{ioffe2015batch} and \textit{rectified linear unit}~(ReLU,~\citeauthor{glorot2011deep},~\citeyear{glorot2011deep}) activation functions. For all image datasets, where the actual images with their associated noisy class labels from the crowdworkers are published, we employ a DINOv2 vision transformer (vit-s/14, \cite{oquab2023dinov2}) as a backbone model. Analog to this, we use an MPNet sentence transformer (all-mpnet-base-v2, \citeauthor{song2020mpnet}, \citeyear{song2020mpnet}; \citeauthor{reimers2019sentence}, \citeyear{reimers2019sentence}) as the backbone for the sentences of the dataset~\texttt{spc}. Both backbones' pre-trained weights remain fixed to preserve the robust feature representations as inputs to an MLP head as the data classification model~$\boldsymbol{f}_{\boldsymbol{\theta}}$ with the same architecture (apart from the input dimensions) as for the other datasets.

\paragraph{Learning from Crowds Approaches} Table \ref{tab:hyperparam_overview} lists the $13$ LFC approaches evaluated in our study. We focus on one-stage LFC approaches, whose end-to-end training has yielded state-of-the-art performances~\citep{nguyen2024noisy,herde2024annot,ibrahim2023deep}. Of this type, we include approaches that model class-dependent and instance-dependent crowdworker performances. Further, we consider two two-stage approaches, of which \texttt{mv} serves as a lower baseline because it trains the data classification model~$\boldsymbol{f}_{\boldsymbol{\theta}}$ on majority vote labels. Implementing the crowdworker classification model~$\boldsymbol{g}_{\boldsymbol{\theta}}$ and the crowdworker performance model~$h_{\boldsymbol{\theta}}$ depends on the respective LFC approach, of which Appendix~\ref{app:inference-overview-lfc-approaches} provides more detailed descriptions.

\begin{table}[!ht]
\scriptsize
\centering
\caption{
    \textbf{Overview of LFC approaches' general and individual HP search spaces.} For each HP, we define a default value and a search space as the basis for the HPO. The notation \textit{not applicable} (N/A) indicates that an LFC approach does not introduce additional HPs or that an HP is not optimized. The expressions \texttt{uniform} and \texttt{log-uniform} define the search spaces as distributions used for generating HPCs.
}
\label{tab:value_ranges}
\setlength{\tabcolsep}{9.4pt}
\def\arraystretch{1.0}
\begin{tabular}{lllll}

\toprule

\textbf{Approach} & \textbf{Reference} & \textbf{Hyperparameter} & \textbf{Default Value} & \textbf{Search Space} \\ 

\midrule
\multirow{7}{*}{General}    & \multirow{7}{*}{N/A} & optimizer                 & RAdam             & N/A \\
                            & & learning rate scheduler   & cosine annealing  & N/A \\
                            & & number of epochs          & $30$  & $\texttt{uniform}(\{5, 30, 50\})$ \\
                            & & batch size                & $32$              & $\texttt{uniform}(\{16, 32, 64\})$ \\
                            & & initial learning rate     & $10^{-3}$         & $\texttt{loguniform}([10^{-4}, 10^{-1}])$ \\
                            & & weight decay              & $0$         & $\texttt{loguniform}([10^{-6}, 10^{-3}])$ \\
                            & & dropout rate              & $0.0$             & $\texttt{uniform}([0.0, 0.5])$ \\
\hline
\rowcolor{yescolor!10} \multicolumn{5}{c}{Two-stage Approach without Crowdworker Performance Modeling} \\
\hline
\texttt{mv}             & N/A & N/A               & N/A           & N/A \\
\hline
\rowcolor{yescolor!10} \multicolumn{5}{c}{Two-stage Approach with Class-dependent Crowdworker Performance Modeling} \\
\hline
\texttt{ds} & \citeauthor{dawid1979maximum}        & N/A               & N/A           & N/A \\
\hline
\rowcolor{yescolor!10} \multicolumn{5}{c}{One-stage Approaches with Class-dependent Crowdworker Performance Modeling} \\
\hline
\texttt{cl} & \citeauthor{rodrigues2018deep} & N/A & N/A & N/A \\
\hline
\texttt{trace} & \citeauthor{tanno2019learning} & confusion matrix regularization ($\lambda$) & $10^{-2}$ & $\texttt{loguniform}([10^{-3}, 10^{-1}])$ \\
\hline
\multirow{2}{*}{\texttt{conal}} & \multirow{2}{*}{\citeauthor{chu2021learning}} & confusion matrix regularization ($\lambda$) & $10^{-5}$ & $\texttt{loguniform}([10^{-6}, 10^{-3}])$ \\
                                                       & & embedding dimension & 20 & $\texttt{uniform}(\{20, 40, 60, 80\})$ \\
\hline
\texttt{union-a} & \multirow{2}{*}{\citeauthor{wei2022deep}} & \multirow{2}{*}{confusion matrix initialization ($\epsilon$)} & \multirow{2}{*}{$10^{-5}$} & \multirow{2}{*}{$\texttt{loguniform}([10^{-6}, 10^{-4}])$} \\
\texttt{union-b} & & & & \\
\hline
\texttt{geo-f} &  \multirow{2}{*}{\citeauthor{ibrahim2023deep}} & \multirow{2}{*}{confusion matrix regularization ($\lambda$)} & \multirow{2}{*}{$10^{-3}$} & \multirow{2}{*}{$\texttt{loguniform}([10^{-4}, 10^{-2}])$} \\
\texttt{geo-w} & &  & &  \\
\hline
\rowcolor{yescolor!10} \multicolumn{5}{c}{One-stage Approaches with Instance-dependent Crowdworker Performance Modeling} \\
\hline
\multirow{4}{*}{\texttt{madl}} & \multirow{4}{*}{\citeauthor{herde2023multi}} & confusion matrix initialization ($\eta$) & $0.8$ & $\texttt{uniform}([0.75, 0.95])$ \\
                                                     & & gamma distribution parameter ($\alpha$) & $1.25$ & $\texttt{uniform}([1.0, 1.5])$ \\
                                                     & & gamma distribution parameter ($\beta$) & $0.25$ & $\texttt{uniform}([0.25, 0.5])$ \\
                                                     & & embedding dimension ($Q$) & $16$ & $\texttt{uniform}(\{8, 16, 32\})$ \\
\hline
\texttt{crowd-ar} & \citeauthor{cao2023learning}   & loss balancing & $0.9$ & $\texttt{uniform}([0.5, 1.0])$ \\
\hline
\multirow{2}{*}{\texttt{annot-mix}} & \multirow{2}{*}{\citeauthor{herde2024annot}}   & confusion matrix initialization ($\eta$) & $0.9$ & $\texttt{uniform}([0.75, 0.95])$ \\
                                                            & & mixup ($\alpha$) & $1.0$ & $\texttt{uniform}([0.0, 2.0])$ \\                                                            
\hline
\multirow{3}{*}{\texttt{coin}} & \multirow{3}{*}{\citeauthor{nguyen2024noisy}}   & outlier regularization ($\mu_1$) & $10^{-2}$ & $\texttt{loguniform}([10^{-3}, 10^{-1}])$ \\
                                                                                & & volume regularization ($\mu_2$) & $10^{-2}$ & $\texttt{loguniform}([10^{-3}, 10^{-1}])$ \\
                                                                                & & norm computation ($p$) & $0.4$ & $\texttt{uniform}((0.0, 1.0])$ \\

\bottomrule
\end{tabular}
\end{table}

\paragraph{Hyperparameter Search} Table~\ref{tab:hyperparam_overview} lists general (approach-agnostic) HPs and approach-specific HPs. All approaches share the general HPs. Concretely, we fix RAdam~\citep{liu2019variance} as the optimizer in combination with a cosine annealing learning rate scheduler~\citep{loshchilov2017sgdr} without restarts to gradually reduce the learning rate over the training process, thereby promoting stable convergence. For the remaining general HPs, we define suitable search spaces derived from related literature and default values of PyTorch~\citep{paszke2019pytorch} optimizers (e.g., no weight decay). The distributions for sampling HP values are denoted as \texttt{uniform} and \texttt{log-uniform}. For approach-specific HPs, we adopt default values from the publications or codebases. If available, the search spaces are also extracted from these two sources. Otherwise, they are defined based on reasonable value ranges. The defined HP search spaces are sampled using Sobol sequences~\citep{sobol1998quasi} as HP search strategy (see Section~\ref{subsec:hpo} for our rationale). A total of $50$ distinct HPCs are generated per combination of LFC approach and dataset variant. Together with the default HPC, this yields a set of $|\Lambda| = 51$ HPCs, from which the HPS criteria must pick the best performer.

\paragraph{Hyperparameter Selection Criteria} Each HPC is evaluated via a $K=5$-fold cross-validation given the crowd-labeled training set to obtain risk estimates for the respective HPS criterion. The HPC picked by the respective criterion (see Eq.~\eqref{eq:hpc-selection}) is then tested on the hold-out test set with true labels for five different initializations of the neural networks' weights, ensuring a reliable assessment of the data classification model's generalization performance. We include two variants of default HPCs. On the one hand, we use the default values specified in Table~\ref{tab:value_ranges} across all datasets. This criterion, to which we refer as \textsc{def} (see Eq.~\eqref{eq:naive-empirical-risk}), is the most naive one since there is no consideration of the datasets' individual requirements. A more advanced and commonly used alternative in LFC evaluation studies is to fix one default HPC per dataset, denoted \textsc{def-data}. For each dataset variant, we first perform a conventional HPO: a vanilla classification model, ignoring the LFC setting, is trained and validated on the true class labels, tuning only the approach-agnostic HPs~\citep{herde2024dopanim}. The best HPC obtained from this search is then frozen and transferred to every LFC approach. Consequently, every approach shares the same general HPC for that dataset variant while its approach-specific HPs stay at their default values. For well-studied benchmarks, one could alternatively adopt HPCs reported in the literature~\citep{tanno2019learning}, yet we rely on the HPO variant as \textsc{def-data} criterion to guarantee consistency across datasets. This \textsc{def-data} criterion differs from our upper baseline criterion \textsc{true}, where all HPs of an LFC approach are optimized by training on the noisy crowd-labeled data and validating on the true labels (see Eq.~\eqref{eq:true-empirical-risk}). Despite this difference, \textsc{def-data} and \textsc{true} are the only HPS criteria requiring access to the true labels.

\subsection{Experimental Results}
Our setup encompasses $5$ datasets, each with $7$ variants, $13$ LFC approaches, and $11$ HPS criteria. Due to the many combinations, we present only the main results addressing our two research questions here and refer to Appendix~\ref{subapp:supplementary-results} for the complete results list.
\newline

$\mathbf{RQ_1}$: \textit{Given access only to crowd-labeled validation data, which evaluated HPS criterion yields the highest performance for LFC approaches?}

\begin{table}[!b]
    \scriptsize
    \setlength{\tabcolsep}{8.2pt}
    \def\arraystretch{1}
    \caption{
        \textbf{HPS criteria's results.} One column per criterion reports the rank compared to the other criteria and the zero-one loss reductions (absolute as percentage points [$\%_p$] and relative as percentages [$\%$]) compared to \textsc{def} as criterion. Means and standard errors are computed over all combinations of dataset variants and LFC approaches (excluding the approach \texttt{mv} that is not compatible with each criterion). The arrows show whether a smaller ($\downarrow$) or higher ($\uparrow$) value is better. The \textBF{best} and \underline{second best} value is marked per result type. A $\star$ marks criteria that had access to the true validation labels.
    }
    \label{tab:model-selection-criteria-results}
    \begin{tabular}{ccccccccccc}
        \toprule
        
        \multicolumn{3}{c}{\color{baseline}\textBF{Baseline}} & \multicolumn{4}{c}{\color{aggregation}\textBF{Aggregation-level}} & \multicolumn{3}{c}{\color{crowd}\textBF{Crowd-level}} & \multicolumn{1}{c}{\color{ensemble}\textBF{Ensemble}} \\
        \cmidrule(lr){1-3} \cmidrule(lr){4-7} \cmidrule(lr){8-10} \cmidrule(lr){11-11}

        \multicolumn{1}{c}{{\color{baseline}$\textsc{true}^\star$}} & \multicolumn{1}{c}{\color{baseline}$\textsc{def-data}^\star$} & \multicolumn{1}{c}{\color{baseline}\textsc{def}} & \multicolumn{1}{c}{\color{aggregation}\textsc{aeu}} & \multicolumn{1}{c}{\color{aggregation}\textsc{aec}} & \multicolumn{1}{c}{\color{aggregation}\textsc{alu}} & \multicolumn{1}{c}{\color{aggregation}\textsc{alc}}  & \multicolumn{1}{c}{\color{crowd}\textsc{cxu}} & \multicolumn{1}{c}{\color{crowd}\textsc{cec}} & \multicolumn{1}{c}{\color{crowd}\textsc{clc}} & \multicolumn{1}{c}{\color{ensemble}\textsc{ens}}\\

        \hline
        \rowcolor{yescolor!10} \multicolumn{11}{c}{\scriptsize Ranks ($\downarrow$)}  \\
        \hline
        $\phantom{+1}\textBF{4.64}$ & $\phantom{+1} 7.84$ & $\phantom{+1} 9.76$ & $\phantom{+1} 5.66$ & $\phantom{+1} 5.57$ & $\phantom{+1} 5.29$ & $\phantom{+1} 5.60$ & $\phantom{+1} 5.46$ & $\phantom{+1} 5.36$ & $\phantom{+1} 5.91$ & $\phantom{+1} \underline{4.89}$ \\
        $\pm\phantom{1}0.15$ & $\pm\phantom{1}0.17$ & $\pm\phantom{1}0.11$ & $\pm\phantom{1}0.11$ & $\pm\phantom{1}0.11$ & $\pm\phantom{1}0.11$ & $\pm\phantom{1}0.13$ & $\pm\phantom{1}0.11$ & $\pm\phantom{1}0.11$ & $\pm\phantom{1}0.13$ & $\pm\phantom{1}0.09$ \\
        
        \hline
        \rowcolor{yescolor!10} \multicolumn{11}{c}{\scriptsize Absolute Zero-one Loss Reductions Compared to \textsc{def} ($\Delta_{\textsc{def}} \,L_{0/1}$ [$\%_p$] $\uparrow$)}  \\
        \hline
        $+\phantom{1}\textBF{5.36}$ & $+\phantom{1}0.76$ & $\phantom{+1}0.00$ & $+\phantom{1}4.55$ & $+\phantom{1}4.59$ & $+\phantom{1}4.73$ & $+\phantom{1}4.43$ & $+\phantom{1}4.52$ & $+\phantom{1}4.57$ & $+\phantom{1}4.07$ & $+\phantom{1}\underline{5.15}$ \\
        $\pm \phantom{1}0.24$ & $\pm\phantom{1}0.31$ & $\pm\phantom{1}0.00$ & $\pm\phantom{1}0.26$ & $\pm\phantom{1}0.26$ & $\pm\phantom{1}0.26$ & $\pm\phantom{1}0.27$ & $\pm\phantom{1}0.29$ & $\pm\phantom{1}0.28$ & $\pm\phantom{1}0.31$ & $\pm\phantom{1}0.26$ \\
        \hline
        \rowcolor{yescolor!10} \multicolumn{11}{c}{\scriptsize Relative Zero-one Loss Reductions Compared to \textsc{def} ($\Delta_{\textsc{def}} \,L_{0/1}$ [$\%$] $\uparrow$)}  \\
        \hline
        $+\textBF{18.20}$ & $=\phantom{1}3.65$ & $+\phantom{1}0.00$ & $+16.18$ & $+16.27$ & $+16.48$ & $+14.66$ & $+15.48$ & $+15.93$ & $+13.30$ & $+\underline{17.60}$ \\
        $\pm \phantom{1}0.73$ & $\pm\phantom{1}1.12$ & $\pm\phantom{1}0.00$ & $\pm\phantom{1}0.76$ & $\pm\phantom{1}0.76$ & $\pm\phantom{1}0.80$ & $\pm\phantom{1}0.81$ & $\pm\phantom{1}0.87$ & $\pm\phantom{1}0.84$ & $\pm\phantom{1}0.91$ & $\pm\phantom{1}0.77$ \\
        \bottomrule
    \end{tabular}
\end{table}
\begin{figure}[!b]
    \centering
    \includegraphics[width=\textwidth]{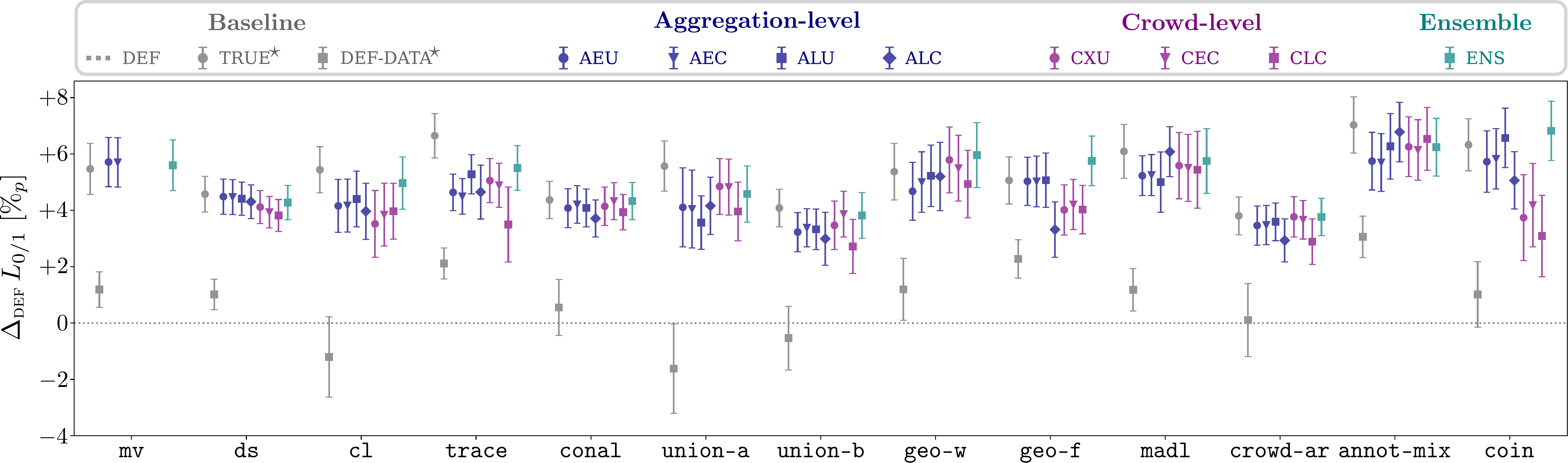}
    \caption{
        \textbf{Absolute zero–one loss reductions of HPS criteria.} For each LFC approach ($x$-axis), the scatter plot displays the mean and standard error of a criterion's reduction in zero–one loss ($y$-axis) as percentage points [$\%_p$] compared to the criterion \textsc{def} (see Appendix~\ref{app:supplementary-experimental-evaluation} for relative reductions as percentages [$\%$]). Higher reductions correspond to greater improvements. A~$\star$ marks criteria that had access to the true validation labels.
    }
    \label{fig:criteria-diff-per-approach}
\end{figure}

We compare the HPS criteria with each other across multiple datasets and LFC approaches. Therefore, we report performance results in the form of rankings and zero-one losses in Table~\ref{tab:model-selection-criteria-results}. One central finding is that the criterion \textsc{def} consistently ranks poorly, supporting our motivation that using default HPCs leads to underestimating the approaches' potential performances. Moreover, the poor rank of the \textsc{def-data} criterion indicates that using well‐tuned HPCs from standard classification tasks (without any noisy labels) does not account for the unique requirements of each LFC approach. As anticipated, the \textsc{true} criterion, which utilizes the true class labels for selecting the best HPC per LFC approach and per dataset, performs superiorly to every other criterion. Concretely, it ranks best and provides the highest loss reductions compared to the \textsc{def} criterion across all dataset variants. Among the criteria relying solely on crowd-labeled validation data, the ensemble-based \textsc{ens} criterion is the clear runner-up: it ranks better and provides higher loss reductions than all other competing criteria apart from the upper baseline criterion \textsc{true}. Reducing the size of the ensemble decreases its performance (see ablation study in Appendix~\ref{subapp:supplementary-results}). At the same time, the extra computation demanded by \textsc{ens} remains negligible (see time complexity analysis in Appendix~\ref{subapp:time-complexity}): the expensive steps are the training and inferring the predictions of the models, and once those predictions are available, they can be reused to compute every risk estimate. Finally, although the \textsc{aeu} criterion, which aggregates the noisy validation labels per instance via majority voting, performs worse than the criteria \textsc{true} and \textsc{ens}, it still yields substantial improvements over default HPCs. Together, these results highlight the benefits of HPO in LFC settings with crowd-labeled validation data. Besides these results averaged over the LFC approaches, we also examine how each criterion performs in combination with each approach individually. For this purpose, Figure~\ref{fig:criteria-diff-per-approach} breaks down the absolute zero-one loss reductions according to the LFC approaches. This way, we measure how much a specific criterion improves an approach relative to using that approach's default HPC. It does not indicate which LFC approach is superior to others. We observe that for a few LFC approaches, e.g., \texttt{coin} and \texttt{geo-f}, the criterion \textsc{true} does not achieve the highest loss reduction on average. An explanation is that cross-validation uses only subsets of the data for training, so the HPC that minimizes the validation zero-one loss on the subsets may not be optimal when training is performed on the full dataset. This observation also suggests an interdependence between the criterion and the approach. Another example of this interdependence can be found when comparing the results for the \texttt{union-a} approach, where criteria that rely on crowd-level risk estimates outperform those that use aggregation-level risks, to the results for the \texttt{coin} approach, where aggregation-level criteria take the lead. Consequently, when it is unclear which criterion will pair best with a given approach, the ensemble-based criterion \textsc{ens} offers a robust compromise.   

\begin{mybox}[\textbf{$\mathbf{RQ_1}$: Takeaway}]
    Aggregation-level and crowd-level HPS criteria, estimating risks only from noisy crowdworker labels, enable effective HPO in LFC. Combining them via the ensemble-based criterion \textsc{ens} yields the highest performance across dataset variants and LFC approaches.
\end{mybox}
\vspace*{1em}

$\mathbf{RQ_2}$: \textit{Given the best-evaluated HPS criterion for crowd-labeled validation data, how do LFC approaches compare in performance?}

\begin{table}[!t]
    \scriptsize
    \setlength{\tabcolsep}{5.1pt}
    \def\arraystretch{1}
    \caption{
        \textbf{LFC approaches' results with} \textsc{ens} \textbf{as HPS criterion.} One column per approach reports the rank compared to the other approaches and the zero-one loss reductions (absolute as percentage points [$\%_p$] and relative as percentages [$\%$]) compared to the approach \texttt{mv} trained with its default (\textsc{def}) HPC. Means and standard errors are computed over all dataset variants. The arrows indicate whether a smaller ($\downarrow$) or higher ($\uparrow$) value is better. The \textBF{best} and \underline{second best} value is marked per result type.
    }
    \label{tab:approaches-ens-results}
    \begin{tabular}{ccccccccccccc}
        \toprule
        
        \multicolumn{2}{c}{\color{baseline}\textBF{Baseline}} & \multicolumn{7}{c}{\color{aggregation}\textBF{Class-dependent}} & \multicolumn{4}{c}{\color{crowd}\textBF{Instance-dependent}}  \\
        
        \cmidrule(lr){1-2} \cmidrule(lr){3-9} \cmidrule(lr){10-13}

        \multicolumn{1}{c}{{\color{baseline}\texttt{mv}}} & \multicolumn{1}{c}{\color{baseline}\texttt{ds}} & \multicolumn{1}{c}{\color{aggregation}\texttt{cl}} & \multicolumn{1}{c}{\color{aggregation}\texttt{trace}} & \multicolumn{1}{c}{\color{aggregation}\texttt{conal}} & \multicolumn{1}{c}{\color{aggregation}\texttt{union-a}} & \multicolumn{1}{c}{\color{aggregation}\texttt{union-b}}  & \multicolumn{1}{c}{\color{aggregation}\texttt{geo-w}} & \multicolumn{1}{c}{\color{aggregation}\texttt{geo-f}} & \multicolumn{1}{c}{\color{crowd}\texttt{madl}} & \multicolumn{1}{c}{\color{crowd}\texttt{crowd-ar}} & \multicolumn{1}{c}{\color{crowd}\texttt{annot-mix}} & \multicolumn{1}{c}{\color{crowd}\texttt{coin}} \\

        \hline
        \rowcolor{yescolor!10} \multicolumn{13}{c}{\scriptsize Ranks ($\downarrow$)}  \\
        \hline
        $\phantom{+}10.57$ & $\phantom{+1}9.09$ & $\phantom{+1}7.90$ & $\phantom{+1}7.31$ & $\phantom{+1}7.37$ & $\phantom{+1}7.60$ & $\phantom{+1}7.61$ & $\phantom{+1}5.81$ & $\phantom{+1}\underline{4.10}$ & $\phantom{+1}5.36$ & $\phantom{+1}8.97$ & $\phantom{+1}5.34$ & $\phantom{+1}\textBF{3.96}$ \\
        $\pm\phantom{1}0.38$ & $\pm\phantom{1}0.53$ & $\pm\phantom{1}0.64$ & $\pm\phantom{1}0.53$ & $\pm\phantom{1}0.53$ & $\pm\phantom{1}0.67$ & $\pm\phantom{1}0.55$ & $\pm\phantom{1}0.59$ & $\pm\phantom{1}0.45$ & $\pm\phantom{1}0.63$ & $\pm\phantom{1}0.43$ & $\pm\phantom{1}0.65$ & $\pm\phantom{1}0.50$ \\
        \hline
        \rowcolor{yescolor!10} \multicolumn{13}{c}{\scriptsize Absolute Zero-one Loss Reductions Compared to \texttt{mv} with \textsc{def} ($\Delta_{\texttt{mv}[\textsc{def}]} \,L_{0/1}$ [$\%_p$] $\uparrow$)}  \\
        \hline
        $+\phantom{0}5.60$ & $+\phantom{0}6.05$ & $+\phantom{0}8.04$ & $+\phantom{0}7.81$ & $+\phantom{0}7.18$ & $+\phantom{0}7.05$ & $+\phantom{0}7.42$ & $+\phantom{0}9.70$ & $+\textBF{10.05}$ & $+\phantom{0}8.52$ & $+\phantom{0}6.87$ & $+\phantom{0}9.21$ & $+\phantom{0}\underline{9.88}$ \\
        $\pm\phantom{0}0.88$ & $\pm\phantom{0}1.17$ & $\pm\phantom{0}1.21$ & $\pm\phantom{0}1.05$ & $\pm\phantom{0}0.92$ & $\pm\phantom{0}0.97$ & $\pm\phantom{0}1.05$ & $\pm\phantom{0}1.30$ & $\pm\phantom{0}1.08$ & $\pm\phantom{0}1.34$ & $\pm\phantom{0}0.97$ & $\pm\phantom{0}0.93$ & $\pm\phantom{0}0.99$ \\
        \hline
        \rowcolor{yescolor!10} \multicolumn{13}{c}{\scriptsize Relative Zero-one Loss Reductions Compared to \texttt{mv} with \textsc{def} ($\Delta_{\texttt{mv}[\textsc{def}]}\,L_{0/1}$ [$\%$] $\uparrow$)}  \\
        \hline
        $+19.31$ & $+21.04$ & $+23.95$ & $+24.99$ & $+24.20$ & $+21.67$ & $+23.82$ & $+28.75$ & $+\underline{30.79}$ & $+27.66$ & $+22.68$ & $+28.22$ & $+\textBF{30.86}$ \\
        $\pm\phantom{0}2.75$ & $\pm\phantom{0}3.14$ & $\pm\phantom{0}2.95$ & $\pm\phantom{0}3.06$ & $\pm\phantom{0}2.83$ & $\pm\phantom{0}2.92$ & $\pm\phantom{0}2.86$ & $\pm\phantom{0}2.99$ & $\pm\phantom{0}2.86$ & $\pm\phantom{0}3.45$ & $\pm\phantom{0}2.87$ & $\pm\phantom{0}2.60$ & $\pm\phantom{0}2.84$ \\
        \bottomrule
    \end{tabular}
\end{table}
\begin{figure}[!t]
    \centering
    \includegraphics[width=\textwidth]{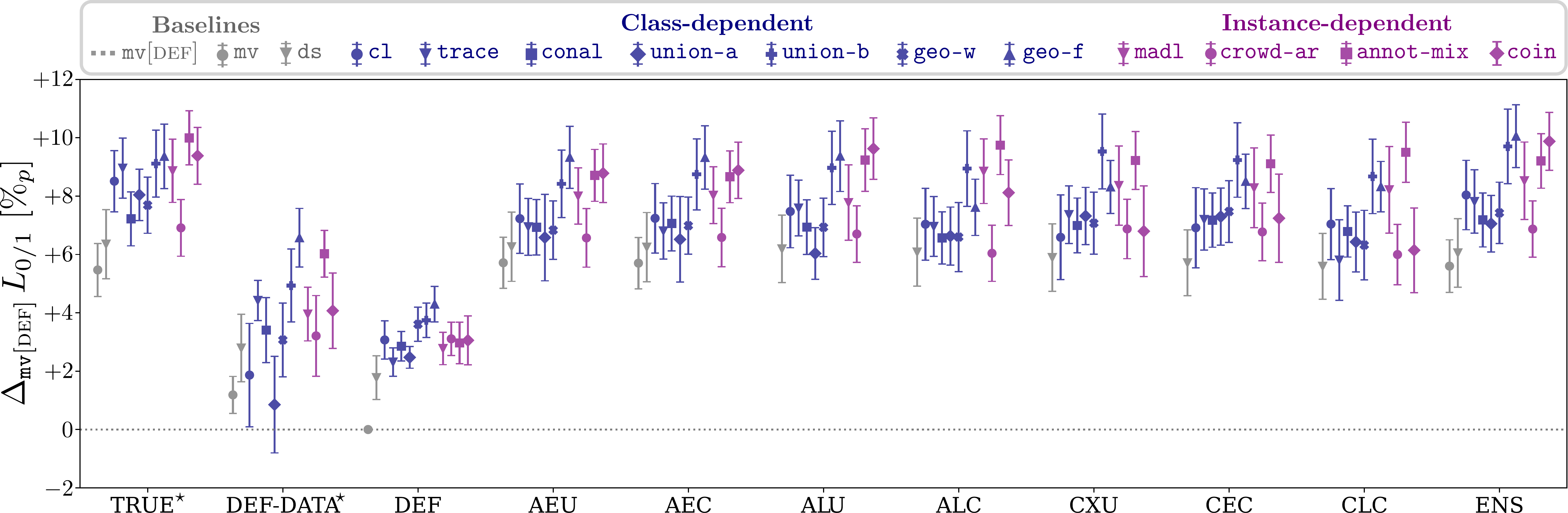}
    \caption{
        \textbf{Absolute zero–one loss reductions of LFC approaches.} For each HPS criterion ($x$-axis), the scatter plot displays the mean and standard error of an LFC approach's reduction in zero–one loss ($y$-axis) as percentage points [$\%_p$] compared to majority voting (\texttt{mv}) trained with its default (\textsc{def}) HPC (see Appendix~\ref{app:supplementary-experimental-evaluation} for relative reductions as percentages [$\%$]). Higher reductions correspond to greater improvements. A~$\star$ marks criteria that had access to the true validation labels.
    }
    \label{fig:approaches-diff-per-criterion}
\end{figure}

We compare the LFC approaches whose HPCs have been selected via the best-evaluated criterion \textsc{ens} across all dataset variants and report the performance results again as rankings and zero-one loss reductions in Table~\ref{tab:approaches-ens-results}. These results underscore the benefits of one-stage LFC approaches that estimate class- or instance-dependent crowdworker performances, as the two-stage baseline approaches \texttt{mv} and \texttt{ds}~\citep{dawid1979maximum}  attain the worst ranks. Moreover, they provide the lowest zero-one loss reductions compared to training \texttt{mv} with its default (\textsc{def}) HPC. The two one-stage approaches \texttt{geo-f}~\citep{ibrahim2023deep} and \texttt{coin}~\citep{nguyen2024noisy} provide performance gains to most of their competitors with an average zero-one loss reduction of around $\si{10}{\%_p}$ corresponding to a relative improvement of over $\si{30}{\%}$ compared to training \texttt{mv} with \textsc{def} as a criterion. Both approaches' idea is to identify the crowdworkers' confusion matrices via special regularization terms, where \texttt{geo-f} estimates a single confusion matrix per worker, while \texttt{coin} relaxes this assumption by modeling instance-dependent outlier terms for the crowdworkers' confusion matrices. Figure~\ref{fig:approaches-diff-per-criterion} breaks down the approaches' absolute zero-one loss reductions compared to training \texttt{mv} with the default (\textsc{def}) HPC according to the HPS criteria.  We observe that similar absolute reductions of the zero-one loss around $\si{10}{\%_p}$ are also achievable when relying on other criteria, such as \textsc{alc} paired with the \texttt{annot-mix} approach~\citep{herde2024annot}. Moreover, the results indicate that the comparison between the approaches' performances is affected by the choice of the criterion. For example, the loss reduction of the approach \texttt{coin} is much inferior to the reduction of the approach \texttt{annot-mix} for the baseline criterion \textsc{def-data}, but is superior when relying on \textsc{ens} as a criterion. 

\begin{figure}[!h]
    \centering
    \includegraphics[width=\linewidth]{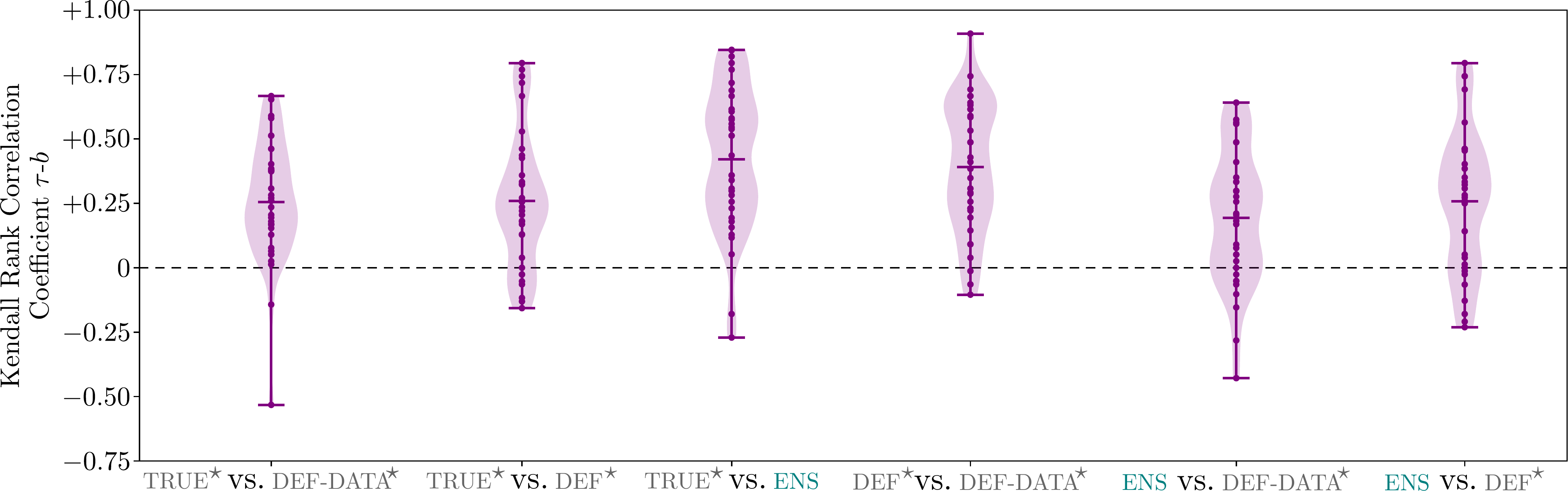}
    \caption{
        \textbf{Rank correlation between HPS criteria.} Each violin plot shows the distribution of pairwise Kendall $\tau$-$b$ coefficients, visualized as violet dots and obtained from 35 dataset variants when comparing the ranking of LFC approaches with their HPCs selected via {\color{baseline} baseline} and {\color{ensemble} ensemble-based} criteria. Higher coefficients indicate stronger correlation.  A $\star$ marks criteria that had access to the true validation labels.
    }
    \label{fig:kendall-violinplots}
\end{figure}
To systematically analyze a criterion's impact on comparing the approaches, we compute the Kendall rank correlation coefficient~\citep{kendall1945treatment} to judge whether the various criteria disagree on the ranking of approaches. Figure~\ref{fig:kendall-violinplots} shows violin plots of these coefficients across all dataset variants for $6$ example criteria pairs. Each coefficient is a kind of distance measurement between two rankings of the approaches obtained after using two different criteria for the same dataset variant. The coefficients are in the interval $[-1, 1]$, where $-1$ indicates a perfect negative ordering, $0$ no ordering, and $1$ a perfect positive ordering. In the absence of ties, a of Kendall $\tau$-$b$ coefficient of $0.50$ between two rankings of the approaches means that, if you randomly sample a pair of approaches, there is a $50\%_p$ advantage that both criteria place the same approach higher (or lower) than the other, rather than disagreeing on which one ranks higher. The criteria \textsc{def} and \textsc{def-data} paired with \textsc{ens} or \textsc{true} have an average Kendall $\tau$-$b$ coefficient of around $0.25$, showing only modest agreement in the ranking of LFC approaches. Criterion \textsc{ens} versus criterion \textsc{true} has an average Kendall $\tau$-$b$ coefficient of about $0.40$, reflecting a moderate positive overlap of rankings. The violin plots, however, make clear that these average Kendall $\tau$-$b$ coefficients mask considerable dispersion: dataset-wise coefficients spread widely around each average, and a few even reach into the negative value range, showing that two criteria reverse the ordering of approaches for some datasets.

\begin{mybox}[\textbf{$\mathbf{RQ_2}$: Takeaway}]
    With the HPS criterion \textsc{ens}, all one-stage LFC approaches, particularly \texttt{geo-f} and \texttt{coin}, outperform the two-stage baselines. The approaches' gains and rankings vary with the criterion, underscoring a criterion's importance for a fair and realistic evaluation.
\end{mybox}

\subsection{Recommendations for Experimentation}
Based on \texttt{crowd-hpo} with its experimental study and related ones (see Section~\ref{sec:related-work}), we make recommendations towards more realistic and fairer experimentation in LFC settings via HPO. We use the term experimentation to emphasize our focus on experiments to set up a benchmark, while a study's specific objectives guide the subsequent analyses. Our \textbf{recommendations} address the following central aspects.
\begin{itemize}[leftmargin=*]
    \item \textit{Datasets:} Focus on datasets with noisy class labels collected from human crowdworkers. Many existing experimental studies in LFC primarily rely on datasets with simulated crowdworkers (see Table~\ref{tab:datasets}). Typically, these simulations apply hand-crafted noise models to probe specific properties of an LFC approach, e.g., robustness to adversarial crowdworkers~\citep{cao2018maxmig} or varying noise levels~\citep{li2022beyond}.
    Yet, how well such noise models mirror real human labeling behavior is unclear. A more realistic alternative is to create variants of noisy label sets~\citep{wei2021learning,herde2024dopanim} for the datasets with human crowdworkers (see Table~\ref{tab:datasets}), which also enables the study of LFC approaches under different noise levels and label redundancies.  
    \item \textit{Learning from crowds approaches:} Evaluate LFC approaches with different training principles and crowdworker performance modeling. By including approaches with two-stage and one-stage training~\citep{li2022beyond}, a study can systematically contrast label aggregation as a separate stage with end-to-end training, enabling the investigation of how factors such as label redundancy and noise level influence their predictive performances. Further, experimenting with approaches modeling class-dependent or instance-dependent crowdworker performances~\citep{herde2023multi} allows a fundamental assessment of when additional modeling complexity is justified.
    \item \textit{Hyperparameter selection criteria:} Employ non-default HPS criteria. Default HPCs lead to underestimating the LFC approaches' actual performances and rendering ranking results less meaningful. In applications where assuming the availability of a separate validation set with true labels is reasonable, validating with those labels (corresponding to the criterion~\textsc{true}) is fine. Otherwise, \texttt{crowd-hpo} offers criteria for crowd-labeled validation data to enable fairer and more realistic experimentation. Given that the optimal criterion may depend on the individual LFC approach, a basic criterion such as \textsc{aeu} or the more robust ensemble-based criterion \textsc{ens} is recommended. In the future, a novel LFC approach should be introduced alongside an HPS criterion tailored to its specific characteristics, thereby improving real-world applicability. This recommendation mirrors recent findings from partial label learning, where HPO with only partial labels for validation has likewise been identified as a key challenge~\citep{wang2025realistic}. 
    \item \textit{Hyperparameter search spaces:} Define tight search spaces covering the most critical HPs for each LFC approach. Established findings in HPO~\citep{bergstra2012random} revealed a substantial impact of the complexity of the search space~$\Omega_{\Lambda}$ on the efficiency and effectiveness of a search strategy. To reduce this complexity, we need to know suitable search spaces for the critical HPs of an LFC approach. Therefore, we advocate that future LFC approaches should include recommendations for search spaces of their approach-specific HPs. When such guidance is unavailable, reasonable search spaces should be established based on theoretical considerations, each HP's role, or potential ablation studies.
\end{itemize}
Our recommendations have several \textbf{limitations}. First, due to inherent interdependencies, we cannot make design choices regarding the above aspects in isolation. For example, the choice of the best HPC criterion for an approach can vary across datasets (see Table~\ref{tab:zero-one-loss-results-1} in Appendix~\ref{app:supplementary-experimental-evaluation}). Second, other essential aspects, such as the HPO search strategy, the number of folds $K$ for the cross-validation, and the number of candidate HPCs $|\Lambda|$, remain unexplored. Studying these aspects can reduce the computational complexity of the HPO. Second, our analysis based on computing means across all datasets does not account for the influence of specific dataset attributes, including noise level or label redundancy. In other words, latent patterns in these characteristics may help decide the HPS criterion or even allow optimizing the risk estimates' weighting for the ensemble-based criterion \textsc{ens} via techniques from metalearning~\citep{brazdil2022metalearning}. Third, while we rely on the zero-one loss function, alternative loss functions such as the Brier score~\citep{brier1950verification} may also be relevant when assessing probabilistic estimates. Fourth and finally, we refer to the number $|\Lambda|$ of candidate HPCs as the computational budget, which does not account for different training and inference complexities between the LFC approaches.
\section{Conclusion}
We introduced \texttt{crowd-hpo} as a framework to enable more realistic and fairer benchmarking of LFC approaches by leveraging HPO with only access to crowd-labeled validation data. We started with exemplary results demonstrating notable zero-one loss reductions and changes in the rankings of LFC approaches when performing HPO with true class labels in the validation set compared to default HPCs. Subsequently, we identified a lack of research regarding HPO with crowd-labeled validation data. Therefore, we proposed and evaluated HPS criteria accounting for the potential noise in class labels from crowdworkers. Across extensive experiments,  our proposed HPS criteria strongly reduced the losses of the LFC approaches relative to their default HPCs. This applies particularly to the ensemble-based criterion \textsc{ens}, which is also easily extensible by including future empirical risk measures for crowd-labeled validation data. However, the ranking of LFC approaches shifted with the criterion applied. These findings grounded our recommendations for future experimentation and benchmarking in LFC settings. To further improve the HPS, future work should rigorously explore advanced HP search strategies, particularly Bayesian optimization \citep{wang2023recent}, and examine how they interact with criteria accounting for crowd-labeled validation data. Finally, \texttt{crowd-hpo} serves as a starting point for developing empirical risk measures that are not only suited for improving HPS given crowd-labeled validation data but ultimately should provide reliable estimates of the approaches' generalization performances given only crowd-labeled test data. 

\impact{%
    In a broader context, we identify an impact on real-world applications and crowdworkers as two branches of potential societal consequences of \texttt{crowd-hpo}. (1) On the one hand, the validation with noisy class labels from the crowd makes the LFC approaches' employment more practical in real-world applications because potential users do not need to rely on default or manually picked HPCs when training the LFC approaches. However, selecting the best HPC is different from accurate risk estimates. In the first case, we need only to rank the HPCs, while in the latter case, we want to have a precise estimate of the resulting data classification model's risk. In other words, despite being able to optimize the selection of the HPC, we do not know the actual generalization performance of the data classification model after training. Correspondingly, there is still no solution to how practitioners can obtain such an estimate without access to a separate test set with true labels. Therefore, practitioners must only employ LFC approaches in safety-critical applications after thorough evaluation based on a separate test set with true labels. (2) On the other hand, wider adoption of LFC approaches can raise demand for crowdworkers. However, crowdworkers often endure difficult working conditions~\citep{bhatti2020general}, e.g., insufficient payments and job insecurity. Therefore, collecting crowd-labeled data for training LFC approaches should always be coupled with explicit provisions for fair working conditions.  
}

\ifnum\value{anonymous}=1
    \acks{%
        This work was funded by Anonymous.
    }
\else
    \acks{%
        This work was funded by the CIL project (P/710, P/1082) through the University of Kassel.
    }
\fi

\bibliography{main}
\bibliographystyle{tmlr}

\appendix
\section{Inference Mechanisms of Learning from Crowds Approaches}
\label{app:inference-overview-lfc-approaches}
This appendix overviews the common inference mechanisms of LFC approaches to better understand the connections between the data classification model~$\boldsymbol{f}_{\boldsymbol{\theta}}$, the crowdworker classification model~$\boldsymbol{g}_{\boldsymbol{\theta}}$, and the crowdworker performance model~$h_{\boldsymbol{\theta}}$. Moreover, the probabilistic estimates of \mbox{Eqs.~\eqref{eq:true-label-probabilities}-\eqref{eq:labeling-accuracy}} are required to evaluate our presented HPS criteria. To explain the inference, we distinguish between two architecture types employed by LFCs approaches: the ones with probabilistic confusion matrices and those with non-probabilistic noise adaptation layers. 

\subsection{Probabilistic Confusion Matrices}
Many LFC approaches~\citep{dawid1979maximum,tanno2019learning,herde2023multi,ibrahim2023deep,cao2023learning,herde2023multi,herde2024annot,nguyen2024noisy} estimate crowdworker performances through confusion matrices, which we formalize as a function $\boldsymbol{Q}_{\boldsymbol{\theta}}: \Omega_X \times [M] \rightarrow \Delta_C^C$. Thereby, a confusion matrix entry has the following probabilistic interpretation:
\begin{equation}
    [\boldsymbol{Q}_{\boldsymbol{\theta}}(\boldsymbol{x}_n, m)]_{c, k} \defas \Pr\left(\boldsymbol{z}_{nm} = \boldsymbol{e}_k \mid \boldsymbol{x}_n, \boldsymbol{y}_n = \boldsymbol{e}_c, \boldsymbol{\theta}\right).
\end{equation}
This confusion matrix entry in row $c \in [C]$ and column $k \in [C]$ is the probability that crowdworker $m$ assigns the class label $\boldsymbol{e}_k$ to instance $\boldsymbol{x}_n$ with $\boldsymbol{e}_c$ as its true class label. Depending on the assumptions of the LFC approach, there are confusion matrices differing in their degree of freedom $\nu \in \mathbb{N}_{>0}$~\citep{herde2023multi}. Here, we distinguish between class-independent ($\nu = 1$) and class-dependent ($\nu = (C-1)^2$) confusion matrices. Moreover, the confusion matrices can be modeled as instance-independent:
\begin{equation}
    \forall \boldsymbol{x}_n, \boldsymbol{x}_l \in \Omega_X: \boldsymbol{Q}_{\boldsymbol{\theta}}(\boldsymbol{x}_n, m) = \boldsymbol{Q}_{\boldsymbol{\theta}}(\boldsymbol{x}_l, m),
\end{equation}
or as an instance-dependent function. Despite different assumptions about confusion matrices, the LFC approaches share the following inference scheme for their crowdworker classification model:
\begin{equation}
    \boldsymbol{g}_{\boldsymbol{\theta}}(\boldsymbol{x}_n, m) \defas  \boldsymbol{Q}^\mathrm{T}_{\boldsymbol{\theta}}(\boldsymbol{x}_n, m)\boldsymbol{f}_{\boldsymbol{\theta}}(\boldsymbol{x}_n)
\end{equation}
and their crowdworker performance model:
\begin{equation}
    h_{\boldsymbol{\theta}}(\boldsymbol{x}_n, m) \defas \sum_{c \in [C]} [\boldsymbol{f}_{\boldsymbol{\theta}}(\boldsymbol{x}_n)]_c \cdot [\boldsymbol{Q}_{\boldsymbol{\theta}}(\boldsymbol{x}_n, m)]_{c,c}.
\end{equation}

\subsection{Non-probabilistic Noise Adaptation Layers}
In contrast to probabilistic confusion matrices, we refer to noise adaptation layers in LFC approaches~\citep{rodrigues2018deep,chu2021learning,wei2022deep} as unconstrained transformations of the estimated true class probabilities. For this purpose, the approach \texttt{cl}~\citep{rodrigues2018deep} introduces a set of crowdworker-specific noise adaptation layers $\{\boldsymbol{W}_m \in \mathbb{R}^{C \times C}\}_{m=1}^M$, with which the crowdworker classification model performs inference via
\begin{equation}
    \boldsymbol{g}_{\boldsymbol{\theta}}(\boldsymbol{x}_n, m) \defas \mathrm{softmax}\left(\boldsymbol{W}_m^\mathrm{T}\boldsymbol{f}_{\boldsymbol{\theta}}(\boldsymbol{x}_n)\right).
\end{equation}
The approach \texttt{conal}~\citep{chu2021learning} extends this set of crowdworker-specific noise adaptation layers by another layer $\boldsymbol{\overline{W}} \in \mathbb{R}^{C \times C}$ modeling common confusions across crowdworkers, which leads to the following inference scheme:
\begin{equation}
    \boldsymbol{g}_{\boldsymbol{\theta}}(\boldsymbol{x}_n, m) \defas (1-\kappa_{nm}) \cdot \mathrm{softmax}\left(\boldsymbol{W}^\mathrm{T}_m\boldsymbol{f}_{\boldsymbol{\theta}}(\boldsymbol{x}_n)\right) + \kappa_{nm} \cdot \mathrm{softmax}\left(\boldsymbol{\overline{W}}^\mathrm{T}\boldsymbol{f}_{\boldsymbol{\theta}}(\boldsymbol{x}_n)\right),
\end{equation}
where $\kappa_{nm} \in [0, 1]$ is an instance- and worker-dependent scalar estimating the degree to which a crowdworker's class label distribution follows common confusions across crowdworkers. Another variant of a noise adaptation layer is implemented by the LFC approaches \texttt{union-a} and \texttt{union-b}~\citep{wei2022deep}. Instead of treating the crowdworkers independently, the two approaches' idea is to model the crowdworkers as a union through a single noise adaptation layer $\boldsymbol{\widetilde{W}} \in \mathbb{R}^{C \times (C \cdot M)}$. Therefore, they do not directly implement a crowdworker classification model but a classification model $\boldsymbol{\widetilde{g}}_{\boldsymbol{\theta}}: \Omega_X \rightarrow \Delta_{C \cdot M}$ treating the crowdworkers' class labels as a union with
\begin{align}
    \boldsymbol{\widetilde{g}}_{\boldsymbol{\theta}}(\boldsymbol{x}_n) &\defas \mathrm{softmax}\left(\boldsymbol{\widetilde{W}}^\mathrm{T}\boldsymbol{f}_{\boldsymbol{\theta}}(\boldsymbol{x}_n)\right) &(\texttt{union-a}), \\
    \boldsymbol{\widetilde{g}}_{\boldsymbol{\theta}}(\boldsymbol{x}_n) &\defas  \mathrm{softmax}\left(\boldsymbol{\widetilde{W}}\right)^\mathrm{T}\boldsymbol{f}_{\boldsymbol{\theta}}(\boldsymbol{x}_n) &(\texttt{union-b}).
\end{align}
As a workaround for approximating the crowdworker classification model, we normalize the outputs associated with each crowdworker, which corresponds to:
\begin{align}
    \boldsymbol{g}_{\boldsymbol{\theta}}(\boldsymbol{x}_n, m) \defas \mathrm{normalize}\left(\left[\boldsymbol{\widetilde{g}}_{\boldsymbol{\theta}}(\boldsymbol{x}_n)\right]_{(m-1) \cdot C + 1 : m \cdot C}\right),
\end{align}
where $[\cdot]_{i:j}$ denotes the entries from index $i$ to index $j$ in a vector. For all these LFC approaches, which do not explicitly implement a probabilistic confusion matrix per crowdworker, we resort to using marginal alignment accuracy, which is computed as the agreement between the predicted crowdworker distribution and the predicted true label distribution as an instance-level proxy measure for crowdworker accuracy:
\begin{equation}
    h_{\boldsymbol{\theta}}(\boldsymbol{x}_n, m) \defas \boldsymbol{f}^{\mathrm{T}}_{\boldsymbol{\theta}}(\boldsymbol{x}_n) \boldsymbol{g}_{\boldsymbol{\theta}}(\boldsymbol{x}_n, m).
\end{equation}

\section{Theoretical Analysis of Hyperparameter Selection Criteria}
\label{app:hyperparameter-selection-criteria}
This appendix expands on our design decisions underlying the HPS criteria framework introduced in Section~\ref{sec:approach}. Using simple propositions, we show that richer modeling assumptions can inject class-specific bias into the posterior computation of Eq.~\eqref{eq:independence}. We then prove that the computation we adopt in Eq.~\eqref{eq:simplified-class-probabilities} is immune to this issue and remains class-agnostic. Finally, we provide an overview of the time complexities of the HPS criteria.

\subsection{Prior Class Probabilities} 
\label{subapp:prior-class-probabilities}
Proposition~\ref{prop:uniform-class-probabilities} motivates our design choice not to use non-uniform prior class probabilities when computing the posterior class probabilities in Eq.~\eqref{eq:independence}. Intuitively, if the prior class probabilities are sufficiently biased toward a particular class, this bias will dominate in the posterior regardless of the observed class labels from the crowdworkers. Therefore, relying on the class probabilities estimated by the data classification model $\boldsymbol{f}_{\boldsymbol{\theta}}$ as prior can bias the aggregated class labels towards the data classification model's predictions.
\begin{proposition}
    \label{prop:uniform-class-probabilities}
    For an instance $\boldsymbol{x}_n\in\mathcal{X}$, let us assume strictly positive likelihoods for the class labels such that $\forall \boldsymbol{e}_c \in \Omega_Y$ and $\forall \boldsymbol{z}_{nm} \in \mathcal{Z}_n$:
    \begin{align}
        \label{eq-prop:uniform-class-probabilities-1}
        \widehat{\Pr}(\boldsymbol{z}_{nm} \mid \boldsymbol{x}_n, \boldsymbol{y}_n = \boldsymbol{e}_c) > 0.
    \end{align}
    Then, for any non-empty allocation of observed class labels $\mathcal{Z}_n \in \mathcal{P}(\Omega_Y) \setminus \{\emptyset\}$ and for any class label $\boldsymbol{e}_{k} \in \Omega_Y$, there exists a constant $\varepsilon \in (0, 1)$ such that from $\widehat{\Pr}(\boldsymbol{y}_n = \boldsymbol{e}_{k} \mid \boldsymbol{x}_n) > \varepsilon$ follows:
    \begin{align}
        \boldsymbol{\overline{z}}(\boldsymbol{x}_n, \mathcal{Z}_{n}) = \boldsymbol{e}_{k}.
    \end{align}
\end{proposition}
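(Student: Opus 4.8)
The plan is to work directly with the unnormalized posterior scores appearing in Eq.~\eqref{eq:independence} and to show that a sufficiently dominant prior forces the MAP estimate of Eq.~\eqref{eq:aggregation-function}. For brevity I would write $\pi_c \defas \widehat{\Pr}(\boldsymbol{y}_n = \boldsymbol{e}_c \mid \boldsymbol{x}_n)$ for the prior and $\ell_c \defas \prod_{\boldsymbol{z}_{nm} \in \mathcal{Z}_n} \widehat{\Pr}(\boldsymbol{z}_{nm} \mid \boldsymbol{x}_n, \boldsymbol{y}_n = \boldsymbol{e}_c)$ for the likelihood factor, so that the posterior is proportional to the score $s_c \defas \pi_c \ell_c$. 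Because the normalizing constant in Eq.~\eqref{eq:independence} is common to all classes, the aggregation function in Eq.~\eqref{eq:aggregation-function} simply returns the $\boldsymbol{e}_c$ maximizing $s_c$; hence it suffices to prove the strict inequality $s_k > s_c$ for every $c \neq k$.

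First I would collect three elementary bounds. By the positivity assumption in Eq.~\eqref{eq-prop:uniform-class-probabilities-1}, each factor of $\ell_k$ is strictly positive over the finite, non-empty multiset $\mathcal{Z}_n$, so $\ell_k > 0$. Since every factor is a probability and therefore lies in $(0, 1]$, each likelihood product obeys $\ell_c \leq 1$. Finally, because the priors form a probability distribution over the $C$ classes, $\pi_c + \pi_k \leq 1$, giving $\pi_c \leq 1 - \pi_k$ for each $c \neq k$. Combining the last two bounds yields the uniform upper bound $s_c = \pi_c \ell_c \leq 1 - \pi_k$ on every competing class $c \neq k$.

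Next I would choose the threshold $\varepsilon \defas \tfrac{1}{1 + \ell_k}$. Since $\ell_k \in (0, 1]$, we have $1 + \ell_k \in (1, 2]$ and thus $\varepsilon \in (0, 1)$, as the statement requires. The hypothesis $\pi_k > \varepsilon$ is equivalent to $\pi_k (1 + \ell_k) > 1$, i.e.\ $\pi_k \ell_k > 1 - \pi_k$. Chaining this with the upper bound from the previous step gives $s_k = \pi_k \ell_k > 1 - \pi_k \geq s_c$ for all $c \neq k$, so $\boldsymbol{e}_k$ is the unique maximizer and therefore $\boldsymbol{\overline{z}}(\boldsymbol{x}_n, \mathcal{Z}_n) = \boldsymbol{e}_k$.

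The argument is essentially a single domination estimate, so no step is genuinely hard; the only point requiring care is identifying the correct threshold $\varepsilon = 1/(1 + \ell_k)$ and recognizing that it may legitimately depend on $\mathcal{Z}_n$ and $\boldsymbol{e}_k$, which the existential quantifier in the statement permits. A secondary subtlety worth making explicit is that the \emph{strict} inequality $s_k > s_c$ rules out ties, so the MAP estimate is unambiguous even without a tie-breaking convention. The positivity assumption is exactly what guarantees $\ell_k > 0$, hence $\varepsilon < 1$, and so prevents the degenerate case in which no admissible prior could dominate the likelihood.
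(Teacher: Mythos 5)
Your proof is correct and follows essentially the same route as the paper's: an explicit-threshold domination argument showing that, once the prior on $\boldsymbol{e}_k$ is large enough, the unnormalized posterior score $\pi_k \ell_k$ strictly exceeds every competitor's score, so the MAP aggregation must return $\boldsymbol{e}_k$. The only difference is cosmetic — you bound each competing likelihood product by $1$ and obtain $\varepsilon = 1/(1+\ell_k)$, whereas the paper bounds competitors by $l_{\max} = \max_{c \neq k} \ell_c$ and obtains the slightly tighter threshold $\varepsilon = l_{\max}/(l_{\max}+\ell_k)$; both choices are valid because the statement only asserts existence of some $\varepsilon \in (0,1)$.
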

\begin{proof}
    Let us define:
    \begin{align}
        l_{nc} &\defas \prod_{\boldsymbol{z}_{nm} \in \mathcal{Z}_n} \widehat{\Pr}(\boldsymbol{z}_{nm} \mid \boldsymbol{x}_n, \boldsymbol{y}_n = \boldsymbol{e}_c) > 0, \\
        l_{\max} &\defas \max_{c \in [C]\setminus\{k\}} \left(l_{nc}\right), \\
        \varepsilon &\defas \frac{l_{\max}}{l_{\max} + l_{nk}}.
    \end{align}
    Then, we have for $\widehat{\Pr}(\boldsymbol{y}_n = \boldsymbol{e}_k \mid \boldsymbol{x}_n) > \varepsilon$ that $\forall \boldsymbol{e}_c \in \Omega_Y \setminus \{\boldsymbol{e}_k\}$:
    \begin{align}
        \widehat{\Pr}(\boldsymbol{y}_n = \boldsymbol{e}_k \mid \boldsymbol{x}_n, \mathcal{Z}_n) &= \widehat{\Pr}(\boldsymbol{y}_n = \boldsymbol{e}_k \mid \boldsymbol{x}_n)\prod_{\boldsymbol{z}_{nm} \in \mathcal{Z}_n} \widehat{\Pr}(\boldsymbol{z}_{nm} \mid \boldsymbol{x}_n, \boldsymbol{y}_n = \boldsymbol{e}_k) \\
        & > \frac{l_{\max}}{l_{\max} + l_{nk}} l_{nk} \\
        & = \frac{l_{nk}}{l_{\max} + l_{nk}} l_{\max} \\
        & = \widehat{\Pr}(\boldsymbol{y}_n \neq \boldsymbol{e}_k \mid \boldsymbol{x}_n) l_{\max} \\
        & \ge \widehat{\Pr}(\boldsymbol{y}_n = \boldsymbol{e}_{c} \mid \boldsymbol{x}_n) l_{c} \label{eq-prop:smaller-value} \\
        & = \widehat{\Pr}(\boldsymbol{y}_n = \boldsymbol{e}_{c} \mid \boldsymbol{x}_n) \prod_{\boldsymbol{z}_{nm} \in \mathcal{Z}_n} \widehat{\Pr}(\boldsymbol{z}_{nm} \mid \boldsymbol{x}_n, \boldsymbol{y}_n = \boldsymbol{e}_c) \\
        & = \widehat{\Pr}(\boldsymbol{y}_n = \boldsymbol{e}_c \mid \boldsymbol{x}_n, \mathcal{Z}_n),
    \end{align}
    where Eq.~\eqref{eq-prop:smaller-value} follows from monotonicity as $\boldsymbol{y}_n = \boldsymbol{e}_{c} \implies \boldsymbol{y}_n \neq \boldsymbol{e}_k$ and by definition as $l_c \le l_{\max}$.
    Therefore, $\boldsymbol{e}_k \in \Omega_Y$ is our MAP estimate and, thus, our aggregated class label.
\end{proof}

\subsection{Crowdworker Confusion Probabilities}
\label{subapp:crowdworker-probabilities}
Proposition~\ref{prop:crowdworker-confusion-probabilities} motivates our design choice not to use full confusion probability estimates for computing the posterior class probabilities in Eq.~\eqref{eq:independence}. Intuitively, suppose the full confusion probabilities for the crowdworkers are sufficiently biased toward a particular class. In that case, this bias will dominate in the posterior regardless of uniform prior class probabilities and observed class labels from the crowdworkers. Therefore, relying on full confusion matrices, as estimated by many LFC approaches in the form of $\boldsymbol{Q}_{\boldsymbol{\theta}}$ in Appendix~\ref{app:inference-overview-lfc-approaches}, can bias the aggregated class labels towards their own predictions. 
\begin{proposition}
\label{prop:crowdworker-confusion-probabilities}
For an instance $\boldsymbol{x}_n\in\mathcal{X}$, suppose uniform prior class probabilities according to Eq.~\eqref{eq:simplified-class-probabilities}. Then, for any non-empty allocation of observed class labels $\mathcal{Z}_n \in \mathcal{P}(\Omega_Y) \setminus \{\emptyset\}$ and for any class label $\boldsymbol{e}_{k} \in \Omega_Y$, there exist confusion probability estimates 
$\widehat{\Pr}(\boldsymbol{z}_{nm} \mid \boldsymbol{x}_n, \boldsymbol{y}_n)$ such that:
\begin{align}
    \boldsymbol{\overline{z}}(\boldsymbol{x}_n, \mathcal{Z}_{n}) = \boldsymbol{e}_k.
\end{align}
\end{proposition}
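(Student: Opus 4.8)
The plan is to prove the statement by explicit construction. Substituting the uniform prior of Eq.~\eqref{eq:simplified-class-probabilities} into the posterior of Eq.~\eqref{eq:independence} removes the prior factor, so that for every class $\boldsymbol{e}_c \in \Omega_Y$ the posterior is proportional to the bare likelihood product $\prod_{\boldsymbol{z}_{nm} \in \mathcal{Z}_n} \widehat{\Pr}(\boldsymbol{z}_{nm} \mid \boldsymbol{x}_n, \boldsymbol{y}_n = \boldsymbol{e}_c)$. By the definition of the MAP aggregation in Eq.~\eqref{eq:aggregation-function}, it therefore suffices to exhibit confusion probabilities for which this product is strictly larger at $\boldsymbol{e}_k$ than at every competing class.

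First I would exploit that the confusion probabilities may carry the worker index $m$, so I can assign each worker $m$ with observed label $\boldsymbol{z}_{nm} = \boldsymbol{e}_{j(m)}$ its own confusion matrix. Fixing a small constant $\varepsilon \in (0, \nicefrac{1}{C})$, I set the row for the true class $\boldsymbol{e}_k$ to place mass $1 - (C-1)\varepsilon$ on the observed category $\boldsymbol{e}_{j(m)}$ and mass $\varepsilon$ on each of the remaining $C-1$ categories, while every row for a class $\boldsymbol{e}_c$ with $c \neq k$ places mass $\varepsilon$ on $\boldsymbol{e}_{j(m)}$ and spreads the residual $1 - \varepsilon$ uniformly over the other categories. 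Each row then sums to one, so these are legitimate confusion matrices, and $C \geq 2$ guarantees the residual mass always has somewhere to go.

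With this choice the likelihood product over the observed labels equals $(1 - (C-1)\varepsilon)^{|\mathcal{Z}_n|}$ for the true class $\boldsymbol{e}_k$ and $\varepsilon^{|\mathcal{Z}_n|}$ for every other class. Since $\mathcal{Z}_n$ is non-empty and $\varepsilon < \nicefrac{1}{C}$ forces $1 - (C-1)\varepsilon > \varepsilon$, the former strictly dominates the latter, so $\boldsymbol{e}_k$ is the unique maximiser and $\boldsymbol{\overline{z}}(\boldsymbol{x}_n, \mathcal{Z}_n) = \boldsymbol{e}_k$, regardless of which labels the crowdworkers actually submitted.

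The one point that requires care — and the reason I route the construction through worker-specific matrices — is keeping every constructed row a valid probability vector: a single shared confusion matrix could not simultaneously concentrate mass on two distinct observed categories when workers disagree, whereas per-worker matrices let me tailor each row to that worker's own label. Because the excerpt permits the confusion probabilities to depend on the index $nm$, this freedom is admissible, and no genuine obstacle remains beyond the routine verification of row-sum normalisation.
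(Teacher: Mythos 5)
Your proof is correct and follows essentially the same route as the paper's: with the uniform prior the posterior is proportional to the bare likelihood product, and one then chooses confusion probability estimates whose likelihood at $\boldsymbol{e}_k$ strictly dominates that at every other class for each observed label, so the MAP estimate is $\boldsymbol{e}_k$. The only difference is that the paper simply posits these pointwise inequalities, whereas you exhibit an explicit per-worker $\varepsilon$-construction and verify row normalisation --- a harmless, slightly more rigorous instantiation of the same idea.
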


\begin{proof}
    With uniform prior class probabilities, the posterior probability for class label $\boldsymbol{e}_c \in \Omega_Y$ is proportional to the joint likelihood:
    \begin{align}
        \widehat{\Pr}(\boldsymbol{y}_n = \boldsymbol{e}_c \mid \boldsymbol{x}_n, \mathcal{Z}_n) \propto \prod_{\boldsymbol{z}_{nm} \in \mathcal{Z}_n} \widehat{\Pr}(\boldsymbol{z}_{nm}\mid \boldsymbol{x}_n, \boldsymbol{y}_n=\boldsymbol e_c).
    \end{align}
    Therefore, by requiring that $\forall \boldsymbol{e}_c \in \Omega_Y \setminus \{\boldsymbol{e}_k\}:$
    \begin{align}
            \widehat{\Pr}(\boldsymbol{z}_{nm}\mid \boldsymbol{x}_n, \boldsymbol{y}_n=\boldsymbol e_k) > \widehat{\Pr}(\boldsymbol{z}_{nm}\mid \boldsymbol{x}_n, \boldsymbol{y}_n=\boldsymbol e_c),
    \end{align}        
    we obtain $\forall \boldsymbol{e}_c \in \Omega_Y \setminus \{\boldsymbol{e}_k\}$:
    \begin{align}
        \widehat{\Pr}(\boldsymbol{y}_n = \boldsymbol{e}_k \mid \boldsymbol{x}_n, \mathcal{Z}_n) &= \prod_{\boldsymbol{z}_{nm} \in \mathcal{Z}_n} \widehat{\Pr}(\boldsymbol{z}_{nm}\mid \boldsymbol{x}_n, \boldsymbol{y}_n=\boldsymbol e_k)\\ &> \prod_{\boldsymbol{z}_{nm} \in \mathcal{Z}_n}\widehat{\Pr}(\boldsymbol{z}_{nm}\mid \boldsymbol{x}_n, \boldsymbol{y}_n=\boldsymbol e_c)\\ &= \widehat{\Pr}(\boldsymbol{y}_n = \boldsymbol{e}_c \mid \boldsymbol{x}_n, \mathcal{Z}_n),
    \end{align}
    which implies that $\boldsymbol{e}_k \in \Omega_Y$ is our MAP estimate and, thus, our aggregated class label.
\end{proof}
\subsection{Maximum-a-posterior Estimate as Weighted Majority Vote}
\label{subapp:weighted-majority-vote}
Proposition~\ref{prop:weighted-majority-voting} motivates our design choice to use uniform prior class probabilities as in Eq.~\eqref{eq:simplified-class-probabilities} and a Bernoulli model for the instance-wise crowdworkers' confusion probabilities as in Eq.~\eqref{eq:simplified-performance}. Intuitively, the posterior class probabilities correspond to a soft weighted majority vote, so the MAP estimate is the label with the most ``soft votes''. This weighted majority voting treats all classes symmetrically, and any difference in posterior probabilities or the MAP estimate arises only from the class labels provided by the crowdworkers, never from a built-in preference for a particular class.

\begin{proposition}
    \label{prop:weighted-majority-voting}
    Given the posterior class probability computation from Eq.~\eqref{eq:simplified-class-posteriors} for an instance $\boldsymbol{x}_n \in \mathcal{X}$ with any non-empty allocation of observed class labels $\mathcal{Z}_n \in \mathcal{P}(\Omega_Y) \setminus \{\emptyset\}$, the aggregated label in Eq.~\eqref{eq:aggregation-function} equals a weighted majority vote such that:
    \begin{align}
        \label{prop-eq:weighted-majority-vote}
        \boldsymbol{\overline{z}}(\boldsymbol{x}_n, \mathcal{Z}_n) = \argmax_{\boldsymbol{e}_c \in \Omega_Y} \left(\sum_{\boldsymbol{z}_{nm} \in \mathcal{Z}_n} \ln\left( \frac{\widehat{\Pr}(\boldsymbol{z}_{nm}^\mathrm{T}\boldsymbol{y}_n = 1 \mid \boldsymbol{x}_n)(C-1)}{\widehat{\Pr}(\boldsymbol{z}_{nm}^\mathrm{T}\boldsymbol{y}_n = 0 \mid \boldsymbol{x}_n)} \right) \cdot \left(\boldsymbol{z}_{nm}^\mathrm{T}\boldsymbol{e}_c\right)\right).
    \end{align}
\end{proposition}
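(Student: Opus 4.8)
The plan is to start from the definition of the aggregation function in Eq.~\eqref{eq:aggregation-function} and transform the objective inside the $\argmax$ through operations that preserve the maximizer until it matches the right-hand side of Eq.~\eqref{prop-eq:weighted-majority-vote}. Since the normalization constant making Eq.~\eqref{eq:simplified-class-posteriors} a proper distribution is identical across all candidate labels $\boldsymbol{e}_c \in \Omega_Y$, it does not influence the $\argmax$, so I would work directly with the unnormalized posterior. Because the natural logarithm is strictly increasing, applying it preserves the $\argmax$ while turning the product over $\boldsymbol{z}_{nm} \in \mathcal{Z}_n$ into a sum.

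Concretely, I would first write the log of the unnormalized posterior as
\[
\sum_{\boldsymbol{z}_{nm} \in \mathcal{Z}_n} \Bigl[ (\boldsymbol{z}_{nm}^\mathrm{T}\boldsymbol{e}_c)\ln \widehat{\Pr}(\boldsymbol{z}_{nm}^\mathrm{T}\boldsymbol{y}_n = 1 \mid \boldsymbol{x}_n) + (1 - \boldsymbol{z}_{nm}^\mathrm{T}\boldsymbol{e}_c)\ln\tfrac{\widehat{\Pr}(\boldsymbol{z}_{nm}^\mathrm{T}\boldsymbol{y}_n = 0 \mid \boldsymbol{x}_n)}{C-1} \Bigr].
\]
Next I would expand the bracket and isolate the additive term $\ln\bigl(\widehat{\Pr}(\boldsymbol{z}_{nm}^\mathrm{T}\boldsymbol{y}_n = 0 \mid \boldsymbol{x}_n)/(C-1)\bigr)$, which is independent of the candidate label $\boldsymbol{e}_c$. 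A constant offset leaves the $\argmax$ unchanged, so I would drop this term and collect the remaining $\boldsymbol{e}_c$-dependent contributions into the single coefficient $\ln\bigl(\widehat{\Pr}(\boldsymbol{z}_{nm}^\mathrm{T}\boldsymbol{y}_n = 1 \mid \boldsymbol{x}_n)(C-1)/\widehat{\Pr}(\boldsymbol{z}_{nm}^\mathrm{T}\boldsymbol{y}_n = 0 \mid \boldsymbol{x}_n)\bigr)$ multiplying $\boldsymbol{z}_{nm}^\mathrm{T}\boldsymbol{e}_c$, which is exactly the claimed weighted-vote form.

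The only subtlety — hardly an obstacle — is ensuring the logarithms are well defined, i.e., that $\widehat{\Pr}(\boldsymbol{z}_{nm}^\mathrm{T}\boldsymbol{y}_n = 1 \mid \boldsymbol{x}_n)$ and $\widehat{\Pr}(\boldsymbol{z}_{nm}^\mathrm{T}\boldsymbol{y}_n = 0 \mid \boldsymbol{x}_n)$ are strictly positive. For the equal-performance case this holds whenever $p \in (\nicefrac{1}{C}, 1)$, and for the learned-performance case whenever the output $h_{\boldsymbol{A}_{\boldsymbol{\lambda}}(\mathcal{T}_k)}(\boldsymbol{x}_n, m)$ lies strictly inside $(0,1)$; I would state this positivity explicitly and note that the degenerate boundary $p=1$ (perfect workers) makes the log-odds diverge but trivially returns the correct vote. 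I would also remark that the coefficient is positive exactly when a worker's estimated accuracy exceeds chance ($\widehat{\Pr}(\boldsymbol{z}_{nm}^\mathrm{T}\boldsymbol{y}_n = 1 \mid \boldsymbol{x}_n) > \nicefrac{1}{C}$), which is what lends the weighted-majority interpretation its intuitive sign and confirms the symmetry across classes emphasized in the surrounding discussion.
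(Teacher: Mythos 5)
Your proposal is correct and follows essentially the same route as the paper's proof: both work with the unnormalized posterior from Eq.~\eqref{eq:simplified-class-posteriors}, use monotonicity of the logarithm/exponential to preserve the $\argmax$, expand the exponents into a sum, and drop the $\boldsymbol{e}_c$-independent term $\ln\bigl(\widehat{\Pr}(\boldsymbol{z}_{nm}^\mathrm{T}\boldsymbol{y}_n = 0 \mid \boldsymbol{x}_n)/(C-1)\bigr)$ to obtain the weighted-vote coefficient. Your explicit attention to positivity of the probabilities (and the degenerate boundary $p=1$, which the paper's range $(\nicefrac{1}{C}, 1]$ technically permits) is a small refinement the paper's proof glosses over, but it does not alter the argument.
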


\begin{proof}
    For ease of notation, let us define:
    \begin{equation}
        \label{proof-eq:alpha-definition}
        \alpha_{nm} \defas \widehat{\Pr}(\boldsymbol{z}_{nm}^\mathrm{T}\boldsymbol{y}_n = 1 \mid \boldsymbol{x}_n).
    \end{equation}
    Then, we can rewrite the un-normalized posterior probability from Eq.~\eqref{eq:simplified-class-posteriors} according to:
    \begin{align}
        \widehat{\Pr}(\boldsymbol{y}_n = \boldsymbol{e}_c \mid \boldsymbol{x}_n, \mathcal{Z}_n) 
        &\propto \exp\left(\ln\left(\prod_{\boldsymbol{z}_{nm} \in \mathcal{Z}_n} 
        (\alpha_{nm})^{\boldsymbol{z}_{nm}^\mathrm{T} \boldsymbol{e}_c}
        \left(\frac{1 - \alpha_{nm}}{C-1}\right)^{1-\boldsymbol{z}_{nm}^\mathrm{T}\boldsymbol{e}_c}\right)\right) \nonumber\\
        &= \exp\left(\sum_{\boldsymbol{z}_{nm} \in \mathcal{Z}_n}\ln\left( 
        (\alpha_{nm})^{\boldsymbol{z}_{nm}^\mathrm{T} \boldsymbol{e}_c}
        \left(\frac{1-\alpha_{nm}}{C-1}\right)^{1-\boldsymbol{z}_{nm}^\mathrm{T}\boldsymbol{e}_c}\right)\right) \nonumber\\
        &= \exp\left(\sum_{\boldsymbol{z}_{nm} \in \mathcal{Z}_n} 
        (\boldsymbol{z}_{nm}^\mathrm{T} \boldsymbol{e}_c)\ln(\alpha_{nm}) + 
        \left(1-\boldsymbol{z}_{nm}^\mathrm{T}\boldsymbol{e}_c\right) \ln\left(\frac{1-\alpha_{nm}}{C-1}\right)\right) \nonumber\\
        &= \exp\left(\sum_{\boldsymbol{z}_{nm} \in \mathcal{Z}_n} 
        (\boldsymbol{z}_{nm}^\mathrm{T} \boldsymbol{e}_c) \left(\ln(\alpha_{nm}) - \ln\left(\frac{1-\alpha_{nm}}{C-1}\right)\right) + \ln\left(\frac{1-\alpha_{nm}}{C-1}\right)\right) \nonumber\\
        &\propto \exp\left(\sum_{\boldsymbol{z}_{nm} \in \mathcal{Z}_n} 
        (\boldsymbol{z}_{nm}^\mathrm{T} \boldsymbol{e}_c) \left(\ln(\alpha_{nm}) - \ln\left(\frac{1-\alpha_{nm}}{C-1}\right)\right)\right) \nonumber\\
        &= \exp\left(\sum_{\boldsymbol{z}_{nm} \in \mathcal{Z}_n} 
        (\boldsymbol{z}_{nm}^\mathrm{T} \boldsymbol{e}_c) \ln\left(\frac{\alpha_{nm}(C-1)}{1 - \alpha_{nm}}\right)\right).
    \end{align}
    Replacing $\alpha_{nm}$ with its definition from Eq.~\eqref{proof-eq:alpha-definition}, we obtain the input expression of the $\argmax$ function in Eq.~\eqref{prop-eq:weighted-majority-vote}.
\end{proof}

\clearpage
\subsection{Worst-case Time Complexity Analysis}
\label{subapp:time-complexity}
\begin{wrapfigure}[18]{r}{0.45\textwidth}
    \vspace*{-0.465cm}
    \centering
    \includegraphics[width=\linewidth]{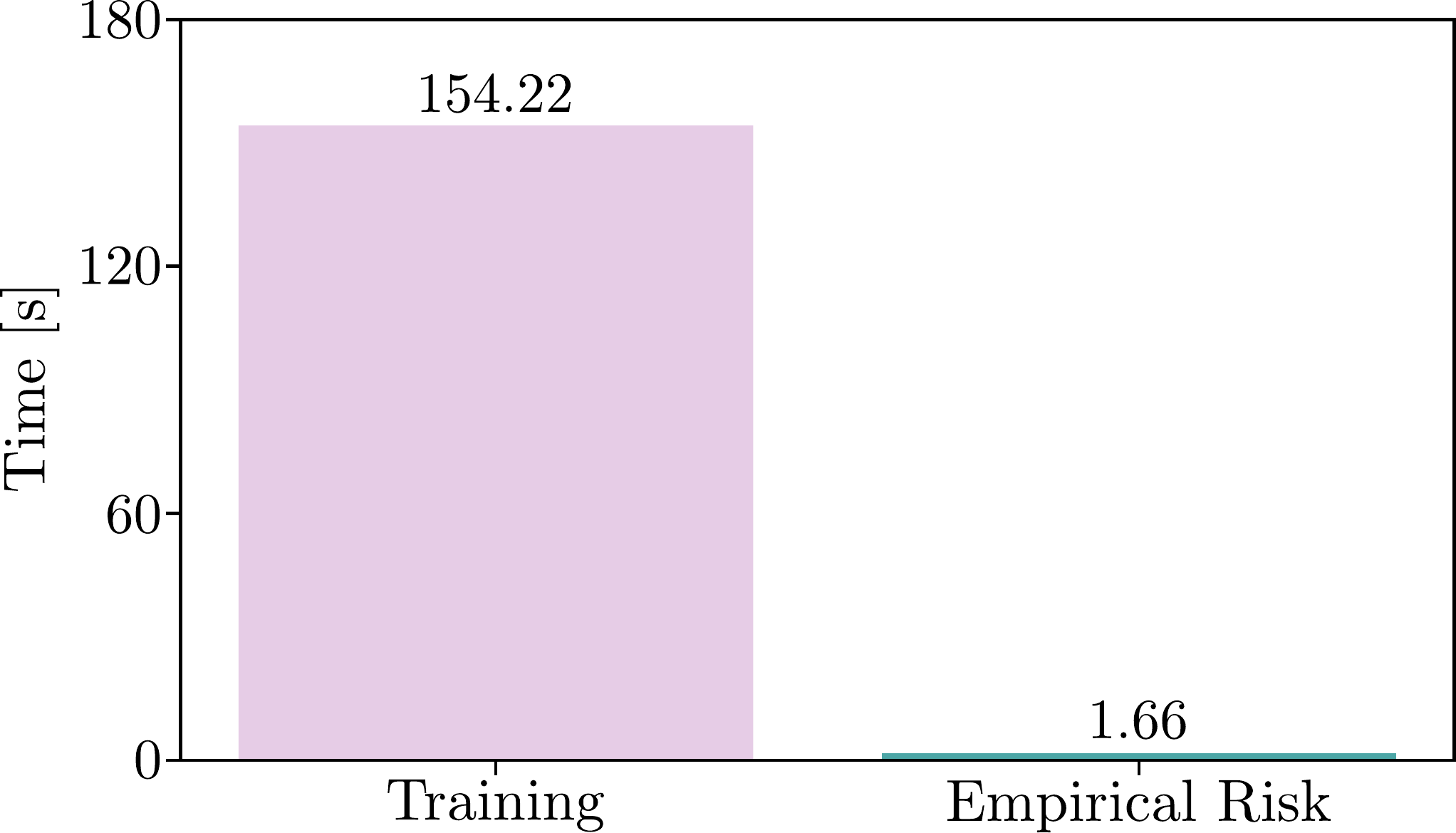}
    \caption{
        \textbf{Computation times.} Given an AMD Ryzen 9 7950X as CPU, the sum of training times for the LFC approach \texttt{coin}~\citep{nguyen2024noisy} with its default HPC and the times for computing all empirical risks in $\mathcal{R}$ (see Section~\ref{subsec:hps-criteria}) are reported in seconds across a $K=5$-fold cross validation for the dataset variant \texttt{dopanim-full}.
    }
    \label{fig:computation-times}
\end{wrapfigure}
We analyze the time complexity when evaluating an HPS criterion as one part of the HPO pipeline. 
More specifically, Table~\ref{tab:time-complexity-analysis} presents the worst-case time complexity of individual steps
involved by the non-default HPS criteria in $\mathrm{O}$-notation.
Typically, the training time complexity is the most expensive part of HPO. Figure~\ref{fig:computation-times} provides an example of this. Here, the training times are almost $100$ times higher than computing the empirical risks (including previous steps such as prediction computation) for evaluating the HPS criterion \textsc{ens}. Even when using a GPU, the difference can be higher for more complex neural network architectures, e.g., when we would fully fine-tune a backbone. Accordingly, differences between time complexities when evaluating the different HPS criteria are of minor importance. Since the training time complexity is identical for each HPS criterion, we do not further analyze this complexity and focus only on the steps related to the actual evaluation of an HPS criterion in the following. 

\begin{table}[!ht]
    \scriptsize
    \centering
    \setlength{\tabcolsep}{10.5pt}
    \def\arraystretch{1}
    \caption{
        \textbf{Worst-case time complexity analysis.} Each row corresponds to one HPS criterion, with its name indicated by the first column. The other column headings refer to the steps of evaluating an HPS criterion. The colors distinguish between {\color{baseline}baseline}, {\color{aggregation}aggregation-level}, {\color{crowd}crowd-level}, and {\color{ensemble}ensemble-based} criteria. There are $N$ observed instances, $M$ crowdworkers, $|\Lambda|$ candidate HPCs, and $C$ classes. The variable $T_{\boldsymbol{\theta}}$ denotes the worst-case time complexity of performing a single forward pass for any model trained by an LFC approach. A~$\star$ marks criteria that had access to the true validation labels.
    }
    \label{tab:time-complexity-analysis}
    \begin{tabular}{lrrrrrr}
        \toprule
        
        \multicolumn{1}{c}{\textbf{Hyperparameter}}  & \multicolumn{1}{c}{\textbf{Prediction}} & \multicolumn{1}{c}{\textbf{True Label Posterior}}   & \multicolumn{1}{c}{\textbf{Empirical Risk}} & \multicolumn{1}{c}{\textbf{Winner}} \\
        \multicolumn{1}{c}{\textbf{Selection Criterion}} & \multicolumn{1}{c}{\textbf{Computation}} & \multicolumn{1}{c}{\textbf{Probability Estimation}} & \multicolumn{1}{c}{\textbf{Computation}} & \multicolumn{1}{c}{\textbf{Selection}} \\

        \midrule

        \rowcolor{baseline!15}$\textsc{true}^\star$ & $\mathrm{O}\left(|\Lambda| \cdot N \cdot T_{\boldsymbol{\theta}} \right)$ & N/A & $\mathrm{O}(|\Lambda| \cdot N \cdot C)$ & $\mathrm{O}(|\Lambda|)$ \\
        \rowcolor{aggregation!15}\textsc{aeu}  & $\mathrm{O}\left(|\Lambda| \cdot N \cdot T_{\boldsymbol{\theta}} \right)$ & $\mathrm{O}\left(|\Lambda| \cdot N \cdot (M + C)\right)$ & $\mathrm{O}\left(|\Lambda| \cdot N \cdot C\right)$ & $\mathrm{O}(|\Lambda|)$ \\
        \rowcolor{aggregation!15} \textsc{aec}  & $\mathrm{O}\left(|\Lambda| \cdot N \cdot T_{\boldsymbol{\theta}} \right)$ & $\mathrm{O}\left(|\Lambda| \cdot N \cdot (M + C)\right)$ & $\mathrm{O}\left(|\Lambda| \cdot N \cdot C\right)$ & $\mathrm{O}(|\Lambda|)$ \\
        \rowcolor{aggregation!15}\textsc{ale}  & $\mathrm{O}\left(|\Lambda| \cdot N \cdot M \cdot T_{\boldsymbol{\theta}} \right)$ & $\mathrm{O}\left(|\Lambda| \cdot N \cdot (M + C)\right)$ & $\mathrm{O}\left(|\Lambda| \cdot N \cdot C\right)$ & $\mathrm{O}(|\Lambda|)$ \\
        \rowcolor{aggregation!15} \textsc{alc}  & $\mathrm{O}\left(|\Lambda| \cdot N \cdot M \cdot T_{\boldsymbol{\theta}} \right)$ & $\mathrm{O}\left(|\Lambda| \cdot N \cdot (M + C)\right)$ & $\mathrm{O}\left(|\Lambda| \cdot N \cdot C\right)$ & $\mathrm{O}(|\Lambda|)$ \\
        \rowcolor{crowd!15} \textsc{cxu}  & $\mathrm{O}\left(|\Lambda| \cdot N \cdot M \cdot T_{\boldsymbol{\theta}} \right)$ & N/A & $\mathrm{O}\left(|\Lambda| \cdot N \cdot M \cdot C\right)$ & $\mathrm{O}(|\Lambda|)$ \\
        \rowcolor{crowd!15} \textsc{cec}  & $\mathrm{O}\left(|\Lambda| \cdot N \cdot M \cdot T_{\boldsymbol{\theta}} \right)$ & $\mathrm{O}\left(|\Lambda| \cdot N \cdot (M + C)\right)$ & $\mathrm{O}\left(|\Lambda| \cdot N \cdot M \cdot C\right)$ & $\mathrm{O}(|\Lambda|)$ \\
        \rowcolor{crowd!15} \textsc{clc}  & $\mathrm{O}\left(|\Lambda| \cdot N \cdot M \cdot T_{\boldsymbol{\theta}} \right)$ & $\mathrm{O}\left(|\Lambda| \cdot N \cdot (M + C)\right)$ & $\mathrm{O}\left(|\Lambda| \cdot N \cdot M \cdot C\right)$ & $\mathrm{O}(|\Lambda|)$ \\
        \rowcolor{ensemble!15} \textsc{ens}  & $\mathrm{O}\left(|\Lambda| \cdot N \cdot M \cdot T_{\boldsymbol{\theta}} \right)$ & $\mathrm{O}\left(J \cdot |\Lambda| \cdot N \cdot (M + C)\right)$ & $\mathrm{O}\left(J \cdot |\Lambda| \cdot N \cdot M \cdot C\right)$ & $\mathrm{O}(J \cdot |\Lambda| \cdot \ln|\Lambda|)$ \\
        \bottomrule
    \end{tabular}
\end{table}

\paragraph{Prediction Computation} Computing model predictions represents the first step of an HPO criterion's evaluation. This step's complexity is affected by which of the three models, i.e., data classification model~$\boldsymbol{f}_{\boldsymbol{\theta}}$, crowdworker classification model~$\boldsymbol{g}_{\boldsymbol{\theta}}$, or crowdworker performance model~$h_{\boldsymbol{\theta}}$, is required to make predictions. If only the data classification model is involved, we need class probability predictions for each validation instance. When performing a $K$-fold cross-validation, each instance is used for validation once. Accordingly, this step scales linearly with $N$ as the number of observed instances and $|\Lambda|$ as the number of candidate HPCs. The variable $T_{\boldsymbol{\theta}}$ denotes the time complexity for obtaining a prediction from one of the three models. The step's complexity increases if the crowdworker classification or crowdworker performance model is involved, because then we need to compute for each of the $ N\cdot M$ instance-crowdworker pairs a prediction in the worst case, where each crowdworker has labeled every observed instance.

\paragraph{True Label Posterior Probability Estimation} The HPS criteria \textsc{true} and \textsc{cxu} do not require any posterior probabilities. For the other criteria, the posterior probabilities are obtained according to Eq.~\eqref{eq:simplified-class-posteriors} by iterating over all of the $M$ (worst case) observed class labels per instance and normalizing across all $C$ classes. Accordingly, this computation has a complexity of $\mathrm{O}(M +C)$ and must be repeated for each of the $N$ observed instances and for each of the $|\Lambda|$ candidate HPCs. For the ensemble-based criterion \textsc{ens}, this process is additionally repeated for each of its $J$ members.

\paragraph{Empirical Risk Computation} Given the probabilistic model predictions and the targets, this step refers to computing the zero-one losses, which are then averaged to obtain the empirical risk measurement. For the HPS criterion \textsc{true} and the aggregation level HPS criteria, we need to find one of the $C$ class labels with the maximum probability predicted by the classification model for each of the $N$ observed instances. In the case of the crowd-level criteria, this step extends to each of the $N \cdot M$ instance-crowdworker pairs. We need to repeat this step for each of the $|\Lambda|$ candidate HPCs. For the HPS criterion \textsc{ens}, this step must also cover each of its $J$ members.

\paragraph{Winner Selection} For nearly all HPS criteria, we find the winner HPC by selecting the one with the lowest empirical risk from the $|\Lambda|$ candidate HPCs. Only for the ensemble-based HPS criterion, we need to compute the Borda count. In this case, the complexity of sorting the $|\Lambda|$ candidate HPCs according to each of their $J$ empirical risk measurements is dominating the subsequent summation and finding of the minimum. In other words, the reported worst-case complexity corresponds to sorting $J$ lists of $|\Lambda|$ elements with the merge sort algorithm. 
\section{Experimental Evaluation}
\label{app:supplementary-experimental-evaluation}
This appendix describes our computational resources for experimentation and provides additional results to the experimental evaluation presented in Section~\ref{sec:empirical_evaluation}.

\subsection{Computational Resources}
Table~\ref{tab:zero-one-loss-results-1} lists the results for all $35$ dataset variants, $13$ LFC approaches, and $11$ HPS criteria. Moreover, we report the results for training with the ground truth (\texttt{gt}) class labels as the upper baseline approach. Each test zero-one loss value is the result of determining the selected HPC from a candidate set of $|\Lambda|=51$ HPCs via a $K=5$-fold cross-validation, of which $50$ HPCs have been generated via Sobol squences~\citep{sobol1998quasi} and one corresponds to the default (\textsc{def}) HPC. Subsequently, each selected HPC is tested with $5$ different initializations of the respective neural network architecture. In total, this corresponds to almost 
\begin{equation}
    \underbrace{5}_{\text{datasets}} \cdot \underbrace{7}_{\text{variants}} \cdot \underbrace{14}_{\text{approaches}}  \left(\underbrace{51}_{\text{HPCs}} \cdot \underbrace{5}_{\text{training \& validation}} + \underbrace{11}_{\text{criteria}} \cdot \underbrace{5}_{\text{training \& testing}}\right) = 151{,}900    
\end{equation}
training and evaluation runs. We executed all runs on a compute cluster equipped with NVIDIA A100 and V100 GPU servers, which we used to pre-compute the image and text embeddings. Almost all other computations were executed with AMD EPYC 7742 CPU servers. The time measurements from Appendix~\ref{subapp:time-complexity} are the only exception because they refer to an AMD Ryzen 9 7950X as the CPU of local workstation.

\subsection{Supplementary Results}
\label{subapp:supplementary-results}
We present supplementary results regarding the HPS criteria' and LFC approaches' performances. Further, we ablate the members' impact on the ensemble-based criterion \textsc{ens}.

\paragraph{Hyperparameter Selection Criteria} Figure~\ref{fig:criteria-relative-reduction-per-approach} depicts (analog graphic to Figure~\ref{fig:approaches-diff-per-criterion}) the criteria's relative zero-one loss reductions compared to the lower baseline criterion \textsc{def}. The results confirm our observations for the absolute zero-one loss reductions, namely, that a criterion's benefit varies across the LFC approaches, while \textsc{ens} is the most robust one for noisy crowd-labeled validation data.
\begin{figure}[!ht]
    \centering
    \includegraphics[width=\textwidth]{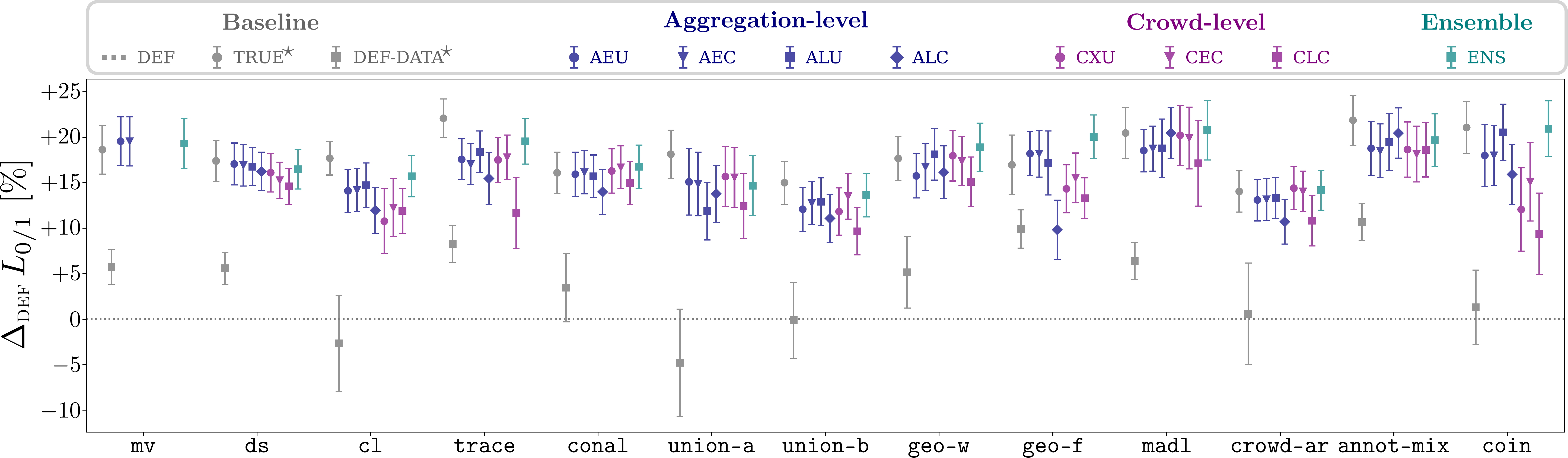}
    \caption{
        \textbf{Relative zero–one loss reductions of HPS criteria.} For each LFC approach ($x$-axis), the scatter plot displays the mean and standard error of a criterion's reduction in zero–one loss ($y$-axis) as a percentage [$\%$] relative to training with default (\textsc{def}) HPC. Higher reductions correspond to greater improvements. A~$\star$ marks criteria that had access to the true validation labels.
    }
    \label{fig:criteria-relative-reduction-per-approach}
\end{figure}

Beyond reporting average reductions in zero-one loss relative to the default HPC, we also examine how performance is distributed across pairs of HPS criteria. Figure~\ref{fig:criteria-winning-matrix}, therefore, presents a pairwise win-rate matrix, where each cell shows the proportion of experiments on which the row criterion beats the column criterion. The matrix shows that the \textsc{true} criterion dominates all alternatives: its win-rate is consistently higher than its loss-rate. Several cell pairs do not sum to one, indicating ties in which the two criteria selected HPCs with identical performance. For the default baselines, \textsc{def} loses to every other criterion, confirming that a single global default HPC per approach is inadequate. \textsc{def-data}, which transfers HPCs optimized on classification tasks without any noisy labels, performs somewhat better yet still lags behind. Accordingly, optimizing the HPC of a standard classification model per dataset with access to true validation labels does not satisfy the LFC approaches' individual requirements. Among criteria that explicitly account for noisy crowd-labeled validation data, the ensemble-based criterion performs best: against every rival except \textsc{true}, it wins more often than it loses. Together with \textsc{true}, it is the only criterion whose loss-rate versus \textsc{def} stays below $\si{10}{\%}$.
\begin{figure}[!h]
    \centering
    \includegraphics[width=0.95\textwidth]{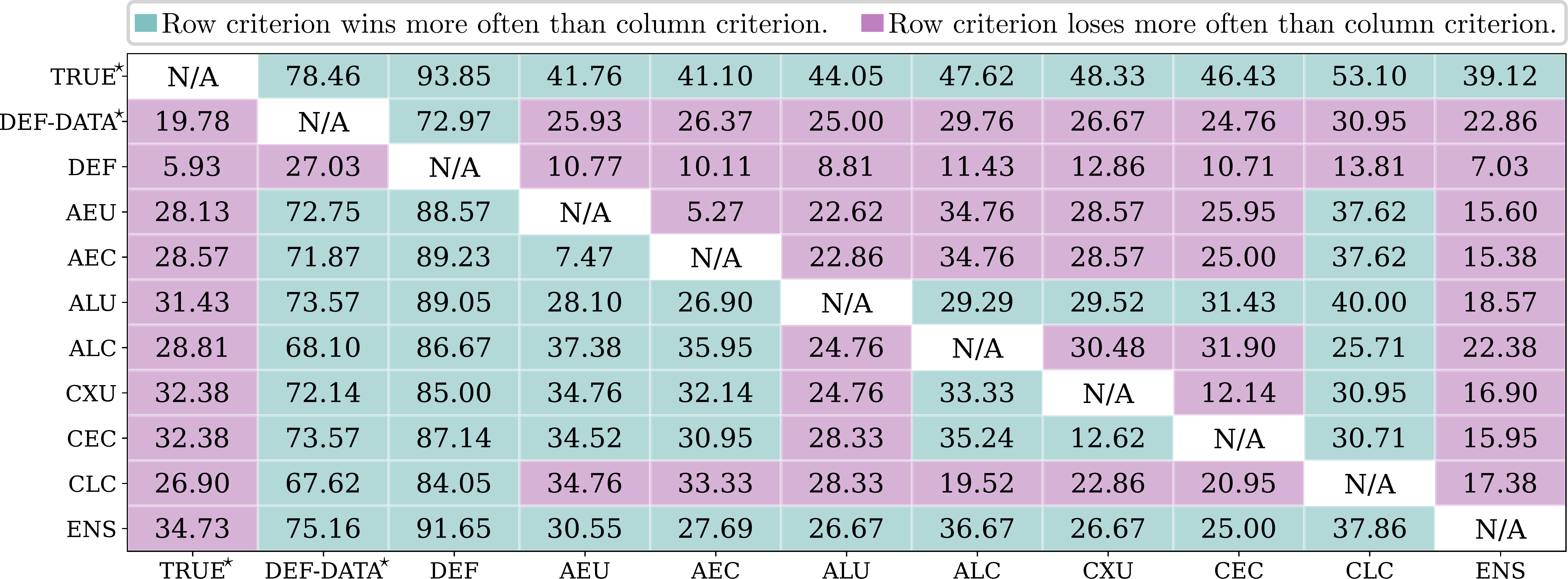}
    \caption{
        \textbf{Pairwise win-rate matrix for HPS criteria.} Across all datasets and LFC approaches, a cell reports the percentage [$\%$] on which the row criterion selects an HPC outperforming the HPC selected by the column criterion. A cell's color decodes whether the row criterion has more {\color{yescolor}wins} or {\color{nocolor}loses} than the column criterion. The diagonal shows \textit{not applicable} (N/A) because a criterion is not compared with itself. A~$\star$ marks criteria that had access to the true validation labels.
    }
    \label{fig:criteria-winning-matrix}
\end{figure}

\paragraph{Learning from Crowds Approaches} Figure~\ref{fig:approaches-relative-reduction-per-criterion} depicts (analog to Figure~\ref{fig:approaches-diff-per-criterion}) the approaches' relative zero-one loss reductions compared to the lower baseline approach \texttt{mv}[\textsc{def}], which corresponds to training with the majority vote labels and the default HPC. The results confirm our observations for the absolute zero-one loss reductions, namely, that one-stage approaches can achieve high performance gains, while the choice of the criterion affects the comparison of the approaches with each other.
\begin{figure}[!ht]
    \centering
    \includegraphics[width=\textwidth]{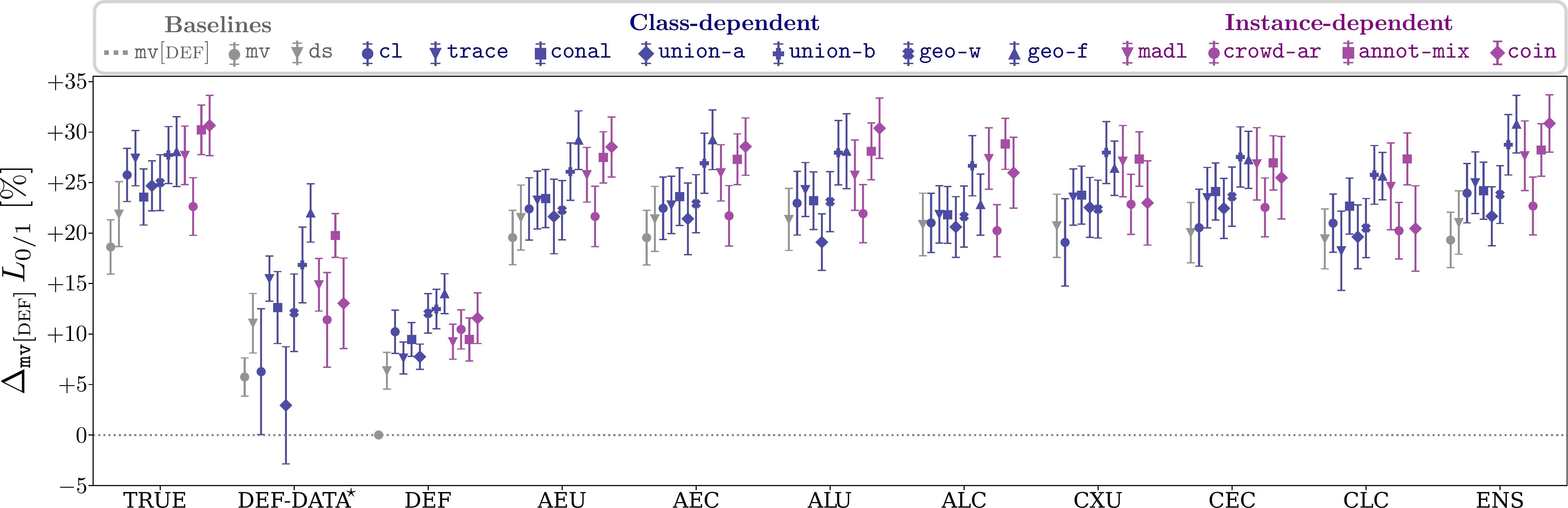}
    \caption{
        \textbf{Relative zero–one loss reductions of LFC approaches.} For each HPS criterion ($x$-axis), the scatter plot displays the mean and standard error of an LFC approach's reduction in zero–one loss ($y$-axis) as percentage [$\%$] compared to majority voting (\texttt{mv}) trained with the default (\textsc{def}) HPC. Higher reductions correspond to greater improvements. A~$\star$ marks criteria that had access to the true validation labels.
    }
    \label{fig:approaches-relative-reduction-per-criterion}
\end{figure}

Beyond reporting average reductions in zero-one loss relative to the lower baseline approach \texttt{mv}[\textsc{def}], we also examine how performance is distributed across pairs of approaches. Figure~\ref{fig:criteria-winning-matrix}, therefore, presents a pairwise win-rate matrix with \textsc{ens} as the HPS criterion, where each cell shows the proportion of dataset variants on which the row approach beats the column approach. Several cell pairs do not sum to one, indicating ties in which the two approaches reached identical performance. The two-stage baseline \texttt{mv} loses to every other approach, confirming that estimating crowdworkers' performances is beneficial. The other two-stage approach \texttt{ds},~\citep{dawid1979maximum}, which estimates crowdworker performances to aggregated labels in the first stage, performs somewhat better yet still lags behind. Accordingly, one-stage LFC approaches combining the crowdworker performance and true label estimation in one joint training lead to superior performances. Among these one-stage approaches, the \texttt{coin} approach~\citep{nguyen2024noisy} dominates all alternatives: its win-rate is consistently higher than its loss-rate. Together with \texttt{geo-f}~\citep{ibrahim2023deep}, it is the only approach whose loss-rate versus \texttt{mv} stays below $\si{10}{\%}$.

\begin{figure}[!ht]
    \centering
    \includegraphics[width=0.95\textwidth]{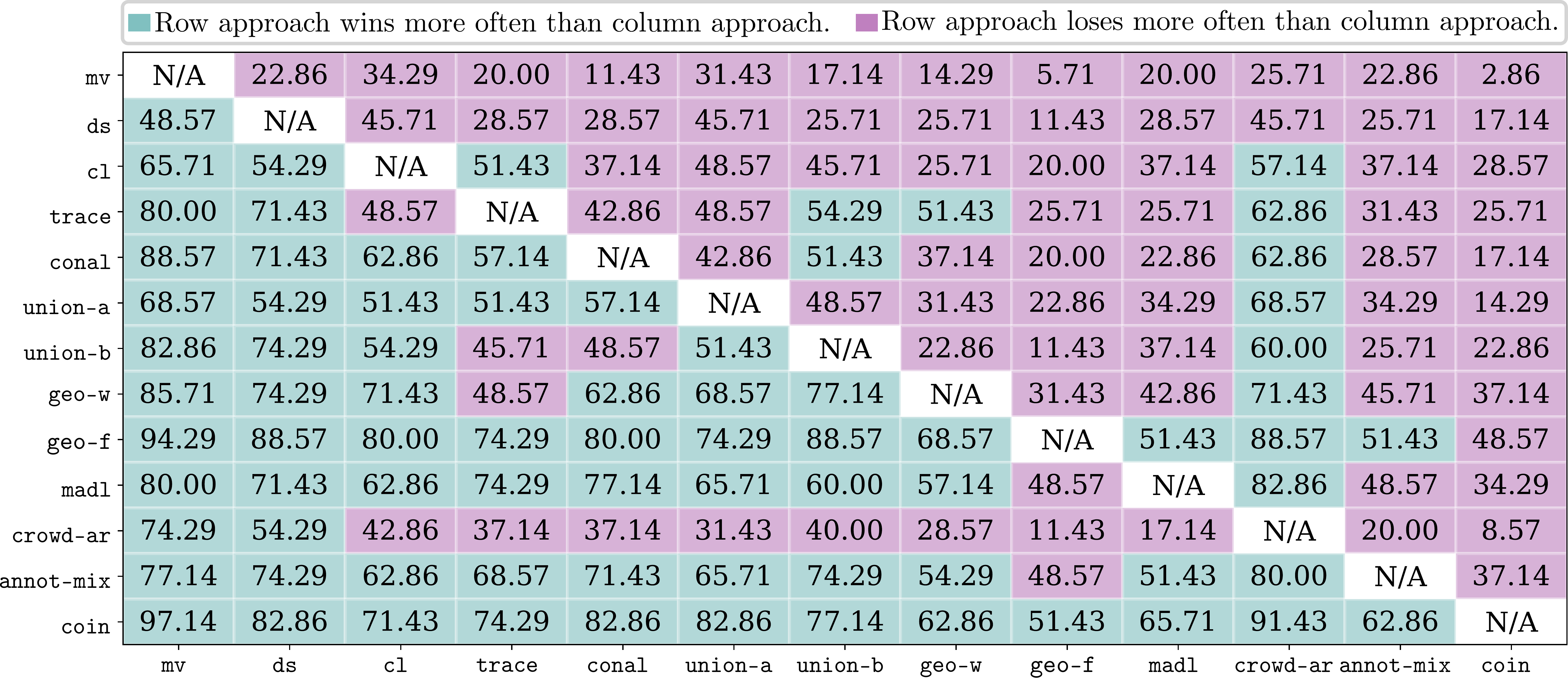}
    \caption{
        \textbf{Pairwise win-rate matrix for LFC approaches with} \textsc{ens} \textbf{as HPS criterion.} Across all dataset variants, a cell reports the percentage [$\%$] on which the row approach outperforms the column approach. A cell's color decodes whether the row approach has more {\color{yescolor}wins} or {\color{nocolor}loses} than the column approach. The diagonal shows \textit{not applicable} (N/A) because an approach is not compared with itself.
    }
    \label{fig:approaches-winning-matrix}
\end{figure}

\paragraph{Ablation Study} We ablate the individual members' impact on the ensemble-based HPS criterion \textsc{ens}. Therefore, we demonstrate the effect of removing risk measures from $\mathcal{R}$ (see Section~\ref{subsec:hps-criteria}) in the ranking order of their respective criterion from Table~\ref{tab:model-selection-criteria-results}, starting with the worst one. We observe that removing members mostly leads to a lower absolute and relative reduction of the zero-one loss compared to the \textsc{def} criterion. This observation confirms the importance of individual members. Nevertheless, there might be unexplored combinations of multiple members leading to higher reductions than the full ensemble. 
\begin{table}[!ht]
    \scriptsize
    \centering
    \setlength{\tabcolsep}{8.85pt}
    \def\arraystretch{1}
    \caption{
        \textbf{Ablation study for} \textsc{ens}\textbf{.} In comparison to \textsc{def} as HPS criterion, each column reports the zero-one loss reductions (absolute as percentage points [$\%_p$] and relative as percentages [$\%$]) for one subset of risk measures in $\mathcal{R}$ when employing the HPS criterion \textsc{ens}. The full set is given in the leftmost column, while each succeeding column shows the results after removing one criterion with its associated risk measure. The rightmost column refers to the case when only one member is remaining, corresponding to the criterion \textsc{alu}. The colors distinguish between {\color{baseline}baseline}, {\color{aggregation}aggregation-level}, {\color{crowd}crowd-level}, and {\color{ensemble}ensemble-based} criteria. Means and standard errors are computed over all combinations of dataset variants and LFC approaches (excluding the approach \texttt{mv} that is not compatible with each criterion's empirical risk measure). The arrows indicate that higher ($\uparrow$) values are better. The \textBF{best} and \underline{second best} value is marked per result type. 
    }
    \begin{tabular*}{\textwidth}{@{\extracolsep{\fill}} cccccccc }
        \toprule
        
        \color{ensemble}\textsc{ens} & \color{crowd}$-\textsc{clc}$ & \color{aggregation}$-\textsc{aeu}$ & \color{aggregation}$-\textsc{alc}$ & \color{aggregation}$-\textsc{aec}$ & \color{crowd}$-\textsc{cxu}$ & ${\color{crowd}-\textsc{cec}}={\color{aggregation}\textsc{alu}}$ \\
        
        \hline
        \rowcolor{yescolor!10} \multicolumn{8}{c}{\scriptsize Absolute Zero-one Loss Reductions Compared to \textsc{def} ($\Delta_{\textsc{def}} \,L_{0/1}$ [$\%_p$] $\uparrow$)}  \\
        \hline
        $+\phantom{0}\mathbf{5.15}$ & $+\phantom{0}5.03$ & $+\phantom{0}\underline{5.09}$ & $+\phantom{0}4.96$ & $+\phantom{0}4.84$ & $+\phantom{0}4.96$ & $+\phantom{0}4.73$ \\
        $\pm\phantom{0}0.26$ & $\pm\phantom{0}0.26$ & $\pm\phantom{0}0.27$ & $\pm\phantom{0}0.26$ & $\pm\phantom{0}0.27$ & $\pm\phantom{0}0.27$ & $\pm\phantom{0}0.26$  \\        
        \hline
        \rowcolor{yescolor!10} \multicolumn{8}{c}{\scriptsize Relative Zero-one Loss Reductions Compared to \textsc{def} ($\Delta_{\textsc{def}}\,L_{0/1}$ [$\%$] $\uparrow$)}  \\
        \hline
        $+\mathbf{17.60}$ & $+\underline{17.49}$ & $+17.43$ & $+17.17$ & $+16.56$ & $+17.14$ &  $+16.48$ \\
        $\pm\phantom{0}0.77$ & $\pm\phantom{0}0.76$ & $\pm\phantom{0}0.78$ & $\pm\phantom{0}0.77$ & $\pm\phantom{0}0.78$ & $\pm\phantom{0}0.78$ & $\pm\phantom{0}0.80$  \\
        \bottomrule
    \end{tabular*}
\end{table}

\newpage
\begin{table}[!p]
    \centering
    \scriptsize
    \setlength{\tabcolsep}{1.7pt}
    \def\arraystretch{0.907}
    \caption{Zero-one loss results [\%] (part I) — The first column lists the LFC approaches and the remaining columns the HPS criteria. Each criterion selects the estimated best HPC per approach, and results are reported as means with standard deviations. The {\textBF{best-performing}} approach per column (excluding \texttt{gt}) and the \underline{best-performing} selection criterion per row (excluding \textsc{true}) are highlighted. The symbol $\star$ marks criteria with access to true validation labels. Some criteria are \textit{not applicable} (N/A) to all approaches.}
    \label{tab:zero-one-loss-results-1}
    \resizebox{\textwidth}{!}{%

    }
\end{table}

\end{document}